\newtheorem{theorem}{Theorem}
\icmltitlerunning{Towards Characterizing Divergence in Deep Q-Learning}
\newcommand{\underE}[2]{\underset{\begin{subarray}{c}#1 \end{subarray}}{\E}\left[ #2 \right]}
\begin{document}
%
%



\newcommand{\avet}{{\mathbf  a}}
\newcommand{\bvet}{{\mathbf  b}}
\newcommand{\cvet}{{\mathbf  c}}
\newcommand{\dvet}{{\mathbf  d}}
\newcommand{\evet}{{\mathbf  e}}
\newcommand{\fvet}{{\mathbf  f}}
\newcommand{\gvet}{{\mathbf  g}}
\newcommand{\hvet}{{\mathbf  h}}
\newcommand{\ivet}{{\mathbf  i}}
\newcommand{\jvet}{{\mathbf  j}}
\newcommand{\kvet}{{\mathbf  k}}
\newcommand{\lvet}{{\mathbf  l}}
\newcommand{\mvet}{{\mathbf  m}}
\newcommand{\nvet}{{\mathbf  n}}
\newcommand{\ovet}{{\mathbf  o}}
\newcommand{\pvet}{{\mathbf  p}}
\newcommand{\qvet}{{\mathbf  q}}
\newcommand{\rvet}{{\mathbf  r}}
\newcommand{\svet}{{\mathbf  s}}
\newcommand{\tvet}{{\mathbf  t}}
\newcommand{\uvet}{{\mathbf  u}}
\newcommand{\vvet}{{\mathbf  v}}
\newcommand{\xvet}{{\mathbf  x}}
\newcommand{\yvet}{{\mathbf  y}}
\newcommand{\zvet}{{\mathbf  z}}
\newcommand{\wvet}{{\mathbf  w}}

\newcommand{\Avet}{{\mathbf  A}}
\newcommand{\Bvet}{{\mathbf  B}}
\newcommand{\Cvet}{{\mathbf  C}}
\newcommand{\Dvet}{{\mathbf  D}}
\newcommand{\Evet}{{\mathbf  E}}
\newcommand{\Fvet}{{\mathbf  F}}
\newcommand{\Gvet}{{\mathbf  G}}
\newcommand{\Hvet}{{\mathbf  H}}
\newcommand{\Ivet}{{\mathbf  I}}
\newcommand{\Jvet}{{\mathbf  J}}
\newcommand{\Kvet}{{\mathbf  K}}
\newcommand{\Lvet}{{\mathbf  L}}
\newcommand{\Mvet}{{\mathbf  M}}
\newcommand{\Nvet}{{\mathbf  N}}
\newcommand{\Ovet}{{\mathbf  O}}
\newcommand{\Pvet}{{\mathbf  P}}
\newcommand{\Qvet}{{\mathbf  Q}}
\newcommand{\Rvet}{{\mathbf  R}}
\newcommand{\Svet}{{\mathbf  S}}
\newcommand{\Tvet}{{\mathbf  T}}
\newcommand{\Uvet}{{\mathbf  U}}
\newcommand{\Xvet}{{\mathbf  X}}
\newcommand{\Yvet}{{\mathbf  Y}}
\newcommand{\Vvet}{{\mathbf  V}}
\newcommand{\Wvet}{{\mathbf  W}}
\newcommand{\Zvet}{{\mathbf  Z}}

\newcommand{\Deltavet}{\mathbf  \Delta}
\newcommand{\Lambdavet}{{\mathbf  \Lambda}}
\newcommand{\Sigmavet}{\mathbf  \Sigma}
\newcommand{\Thetavet}{{\mathbf  \Theta}}

\newcommand{\s}{ {\sigma} }

\newcommand{\e}{{\mathrm e}}
\newcommand{\jm}{{\mathrm j}}
\newcommand{\E}{{\mathrm E}}
\newcommand{\Ex}{{\mathbb E}}
\renewcommand{\d}{{\mathrm d}}
\newcommand{\dt}{{\mathrm d}t}
\newcommand{\X}{ {\mathcal X} }
\newcommand{\Y}{ {\mathcal Y} }
\newcommand{\Z}{ {\mathcal Z} }

\newcommand{\calA}{{\mathcal A}}
\newcommand{\calB}{{\mathcal B}}
\newcommand{\calC}{{\mathcal C}}
\newcommand{\calD}{{\mathcal D}}
\newcommand{\calE}{{\mathcal E}}
\newcommand{\calF}{{\mathcal F}}
\newcommand{\calG}{{\mathcal G}}
\newcommand{\calH}{{\mathcal H}}
\newcommand{\calI}{{\mathcal I}}
\newcommand{\calJ}{{\mathcal J}}
\newcommand{\calK}{{\mathcal K}}
\newcommand{\calL}{{\mathcal L}}
\newcommand{\calM}{{\mathcal M}}
\newcommand{\calN}{{\mathcal N}}
\newcommand{\calO}{{\mathcal O}}
\newcommand{\calP}{{\mathcal P}}
\newcommand{\calQ}{{\mathcal Q}}
\newcommand{\calR}{{\mathcal R}}
\newcommand{\calS}{{\mathcal S}}
\newcommand{\calT}{{\mathcal T}}
\newcommand{\calU}{{\mathcal U}}
\newcommand{\calV}{{\mathcal V}}
\newcommand{\calX}{{\mathcal X}}
\newcommand{\calY}{{\mathcal Y}}
\newcommand{\calW}{{\mathcal W}}
\newcommand{\calZ}{{\mathcal Z}}
\newcommand{\qtil}{{\tilde{q}}}
\newcommand{\td}{{\tilde{\delta}}}

\newcommand{\vect}[1]{ {\mbox{\rm vec}(#1)} }


\newcommand{\Atil}{\tilde{A}}
\newcommand{\Zhat}{\hat{Z}}
\newcommand{\Hbar}{\bar{H}}
\newcommand{\Dhat}{\hat{D}}
\newcommand{\dhat}{\hat{d}}

\newcommand{\rhat}{\hat{r}}
\newcommand{\xhat}{\hat{x}}
\newcommand{\yhat}{\hat{y}}
\newcommand{\zhat}{\hat{z}}
\newcommand{\xbar}{\bar{x}}
\newcommand{\ubar}{\bar{u}}
\newcommand{\ybar}{\bar{y}}
\newcommand{\zbar}{\bar{z}}
\newcommand{\pdot}{\dot{p}}
\newcommand{\pddot}{\ddot{p}}
\newcommand{\pbar}{\bar{p}}
\newcommand{\qdot}{\dot{q}}
\newcommand{\qddot}{\ddot{q}}
\newcommand{\qbar}{\bar{q}}
\newcommand{\xdot}{\dot{x}}
\newcommand{\ydot}{\dot{y}}
\newcommand{\zdot}{\dot{z}}
\newcommand{\yddot}{\ddot{y}}
\newcommand{\thdot}{\dot{\theta}}
\newcommand{\thddot}{\ddot{\theta}}
\newcommand{\util}{{\tilde{u}}}
\newcommand{\xtil}{{\tilde{x}}}
\newcommand{\ytil}{{\tilde{y}}}
\newcommand{\lam}{\lambda}
\newcommand{\lamax}{\lambda\ped{max}}
\newcommand{\lamin}{\lambda\ped{min}}
\newcommand{\adj}{ {\mbox{\rm adj}\;} }
\newcommand{\sign}{\mbox {\rm sgn}}
\newcommand{\spn}{\mbox {\rm span}}
\newcommand{\barJ}{\bar{J}}
\newcommand{\dom}{\mathop {\mathrm {dom}}}
\newcommand{\card}{\mathop{\mathrm{card}}}
\newcommand{\subt}{\mathop{\mathrm{s.t.}}}

\newcommand{\epi}{\mathop{\mathrm{epi}}}
\newcommand{\env}{\mathop{\mathrm{env}}}
\newcommand{\chull}{\mathop{\mathrm{co}}}
\newcommand{\graph}{\mathop{\mathrm{graph}}}
\newcommand{\prox}[1]{\mathop{\mathrm{prox}_{#1}}}
\newcommand{\sthr}[1]{\mathop{\mathrm{sthr}_{#1}}}

\def\hardsection{$\spadesuit\;$}

\newcommand{\Real}[1]{ { {\mathbb R}^{#1} } }
\newcommand{\Realp}[1]{ { {\mathbb R}_{+}^{#1} } }
\newcommand{\Realpp}[1]{ { {\mathbb R}_{++}^{#1} } }
\newcommand{\Complex}[1]{ { {\mathbb C}^{#1} } }
\newcommand{\Imag}[1]{ { {\mathbb I}^{#1} } }
\newcommand{\Field}[1]{ {\mathbb F}^{#1} }
\newcommand{\F}{ {\mathbb F}}
\newcommand{\Orth}[1]{ { {\calG_{\calO}^{#1}} } }
\newcommand{\Unit}[1]{ { {\calG_{\calU}^{#1}} } }
\newcommand{\Sym}[1]{ { {\mathbb S}^{#1} } }
\newcommand{\Symp}[1]{ { {\mathbb S}_{+}^{#1} } }
\newcommand{\Sympp}[1]{ { {\mathbb S}_{++}^{#1} } }
\newcommand{\Herm}[1]{ { {\mathbb H}^{#1} } }
\newcommand{\Skew}[1]{ { {\mathbb S\mathbb K}^{#1} } }
\newcommand{\Skherm}[1]{ { {\mathbb H\mathbb K}^{#1} } }
\newcommand{\Rman}[1]{ { {\mathcal R}^{#1} } } 
\newcommand{\Cman}[1]{ { {\mathcal C}^{#1} } }
\newcommand{\Hinf}[1]{ {  {\mathcal H}_\infty^{#1} } }
\newcommand{\RHinf}[1]{ { {\mathcal RH}_\infty^{#1} } }
\newcommand{\Htwo}[1]{ {  {\mathcal H}_2^{#1} } }
\newcommand{\RHtwo}[1]{ { {\mathcal RH}_2^{#1} } }

\newcommand{\dist}[1]{{\mathrm{dist}}{\left( #1 \right)}}
\newcommand{\diff}[2]{ \frac{\d {#1}}{\d {#2}}  }
\newcommand{\diffp}[2]{ \frac{\partial {#1}}{\partial {#2}}  }
\newcommand{\diffqd}[2]{ \frac{\d^2 {#1}}{\d {#2}^2}  }
\newcommand{\diffq}[2]{ \frac{\d^2 {#1}}{\d {#2}}  }
\newcommand{\diffqq}[3]{ \frac{\d^2 {#1}}{ \d {#2} \d {#3}  }}
\newcommand{\diffpq}[2]{ \frac{\partial^2 {#1}}{\partial {#2}^2}  }
\newcommand{\difftq}[3]{ \frac{\partial^2 {#1}}{\partial {#2}\partial {#3}}  }
\newcommand{\diffi}[3]{ \frac{\d^{#3} {#1}}{\d {#2}^{#3}}  }
\newcommand{\diffpi}[3]{ \frac{\partial^{#3} {#1}}{\partial {#2}^{#3}}  }
\newcommand{\binomial}[2]{\scriptsize{\left(\!\! \ba{c} #1 \\ #2 \ea \!\! \right)} }
\newcommand{\comb}[2]{{\left(\!\!\! \ba{c} #1 \\ #2 \ea \!\!\! \right)} }

\newcommand{\simax}{{\sigma_{\mathrm{max}}}}
\newcommand{\simin}{{\sigma_{\mathrm{min}}}}
\newcommand{\prob}{{\mbox{\rm Prob}}}
\newcommand{\var}{{\mbox{\rm var}}}
\newcommand{\sint}{{\mbox{\rm int}\,}} 
\newcommand{\relint}{{\mbox{\rm relint}\,}} 
\newcommand{\ns}{{\mbox{\tt ns}}}

\newcommand{\rank}{\mathop{\mathrm{rank}}\nolimits}
\newcommand{\range}{\mathop{\mathcal{R}}\nolimits}
\newcommand{\nulsp}{\mathop{\mathcal{N}}\nolimits}
\newcommand{\diagop}{\mathop{\mathrm{diag}}\nolimits}
\newcommand{\Var}{\mathop{\mathrm{var}}\nolimits}
\newcommand{\tr}{\mathop{\mathrm{trace}}\nolimits}
\newcommand{\sinc}{\mathop{\mathrm{sinc}}\nolimits}

\newcommand{\pre}[1]{ { {\mathop{\mathrm{Re}}}  \left({#1}\right)} }
\newcommand{\pim}[1]{ { {\mathop{\mathrm{Im}}}  ({#1})} }
\newcommand{\rp}{ ^{\Real{}} }
\newcommand{\ip}{ ^{\Imag{}} }

\newcommand{\one}{{\mathbf  1}}
\newcommand{\dss}{\displaystyle}
\newcommand{\inv}{^{-1}}
\newcommand{\pinv}{^{\dagger}}
\newcommand{\diag}[1]{\mathrm{diag}\left({#1}\right)}
\newcommand{\blockdiag}[1]{\mbox{\rm bdiag}\left({#1}\right)}
\newcommand{\tran}{^{\top}}
\newcommand{\inner}[1]{\langle {#1} \rangle}
\newcommand{\ped}[1]{_{\mathrm{#1}}}
\newcommand{\ap}[1]{^{\mathrm{#1}}}

\newcommand{\blu}[1]{\textcolor{blue}{#1}}
\newcommand{\red}[1]{\textcolor{red}{#1}}
\newcommand{\green}[1]{\textcolor{green}{#1}}
\newcommand{\cyan}[1]{\textcolor{cyan}{#1}}

\newcommand{\beq}{\begin{equation}}
\newcommand{\eeq}{\end{equation}}
\newcommand{\bea}{\begin{eqnarray}}
\newcommand{\eea}{\end{eqnarray}}
\newcommand{\beas}{\begin{eqnarray*}}
\newcommand{\eeas}{\end{eqnarray*}}
\newcommand{\ba}{\begin{array}}
\newcommand{\ea}{\end{array}}
\newcommand{\bit}{\begin{itemize}}
\newcommand{\eit}{\end{itemize}}
\newcommand{\ben}{\begin{enumerate}}
\newcommand{\een}{\end{enumerate}}
\newcommand{\bde}{\begin{description}}
\newcommand{\ede}{\end{description}}
\newcommand{\bsp}{\begin{split}}
\newcommand{\esp}{\end{split}}


%
%

\def\nocolon{}

\newcommand{\monthyear}{%
  \ifcase\month\or January\or February\or March\or April\or May\or June\or
  July\or August\or September\or October\or November\or
  December\fi\space\number\year
}

\newcommand{\openepigraph}[2]{%
  \begin{fullwidth}
  \sffamily\large
  \begin{doublespace}
  \noindent\allcaps{#1}\\
  \noindent\allcaps{#2}
  \end{doublespace}
  \end{fullwidth}
}

\newcommand{\blankpage}{\newpage\hbox{}\thispagestyle{empty}\newpage}

\twocolumn[
\icmltitle{Towards Characterizing Divergence in Deep Q-Learning}




\begin{icmlauthorlist}
\icmlauthor{Joshua Achiam}{openai,berk}
\icmlauthor{Ethan Knight}{openai,nueva}
\icmlauthor{Pieter Abbeel}{berk,covariant}
\end{icmlauthorlist}

\icmlaffiliation{openai}{OpenAI}
\icmlaffiliation{berk}{UC Berkeley}
\icmlaffiliation{nueva}{The Nueva School}
\icmlaffiliation{covariant}{Covariant.AI}

\icmlcorrespondingauthor{Joshua Achiam}{jachiam@openai.com}

\icmlkeywords{Machine Learning, ICML}

\vskip 0.3in
]



\printAffiliationsAndNotice{}  

\begin{abstract}
    Deep Q-Learning (DQL), a family of temporal difference algorithms for control, employs three techniques collectively known as the `deadly triad' in reinforcement learning: bootstrapping, off-policy learning, and function approximation. Prior work has demonstrated that together these can lead to divergence in Q-learning algorithms, but the conditions under which divergence occurs are not well-understood. In this note, we give a simple analysis based on a linear approximation to the Q-value updates, which we believe provides insight into divergence under the deadly triad. The central point in our analysis is to consider when the leading order approximation to the deep-Q update is or is not a contraction in the sup norm. Based on this analysis, we develop an algorithm which permits stable deep Q-learning for continuous control without any of the tricks conventionally used (such as target networks, adaptive gradient optimizers, or using multiple Q functions). We demonstrate that our algorithm performs above or near state-of-the-art on standard MuJoCo benchmarks from the OpenAI Gym.
\end{abstract}

\section{Introduction}

Deep Q-Learning (DQL), a family of reinforcement learning algorithms that includes Deep Q-Network (DQN) \cite{Mnih2013, Mnih2015} and its continuous-action variants \cite{Lillicrap2016, Fujimoto2018, Haarnoja2018}, is often successful at training deep neural networks for control. In DQL, a function approximator (a deep neural network) learns to estimate the value of each state-action pair under the optimal policy (the $Q$-function), and a control policy selects actions with the highest values according to the current $Q$-function approximator. DQL algorithms have been applied fruitfully in video games \cite{Mnih2015}, robotics \cite{Kalashnikov2018, Haarnoja2018a}, and user interactions on social media \cite{Gauci2018}. 

However, despite the high-profile empirical successes, these algorithms possess failure modes that are poorly understood and arise frequently in practice. The most common failure mode is divergence, where the $Q$-function approximator learns to ascribe unrealistically high values to state-action pairs, in turn destroying the quality of the greedy control policy derived from $Q$ \cite{VanHasselt2018}. Divergence in DQL is often attributed to three components common to all DQL algorithms, which are collectively considered the `deadly triad' of reinforcement learning \cite{Sutton1988, Sutton2018}: 
\begin{itemize}
\item function approximation, in this case the use of deep neural networks,
\item off-policy learning, the use of data collected on one policy to estimate the value of another policy,
\item and bootstrapping, where the $Q$-function estimator is regressed towards a function of itself. 
\end{itemize} 
Well-known examples \cite{Baird1995, Tsitsiklis1997} demonstrate the potential of the deadly triad to cause divergence in approximate $Q$-learning. However, actionable descriptions of divergence in the general case remain elusive, prompting algorithm designers to attack the triad with an increasingly wide variety of heuristic solutions. These include target networks \cite{Mnih2015}, entropy regularization \cite{Fox2016, Haarnoja2018}, n-step learning \cite{Hessel2017}, and approximate double-Q learning \cite{VanHasselt2016}. 

The absence of theoretical characterization for divergence in DQL makes it challenging to reliably deploy DQL on new problems. To make progress toward such a characterization, we give an analysis inspired by \citet{Gordon1995}, who studied the behavior of approximate value learning algorithms in value space. We examine how $Q$-values change under a standard DQL update, and derive the leading order approximation to the DQL update operator. The approximate update turns out to have a simple form that disentangles and clarifies the contributions from the components of the deadly triad, and allows us to identify the important role played by the neural tangent kernel (NTK) \cite{Jacot2018} of the $Q$ approximator. We consider conditions under which the approximate update is or isn't a contraction map in the sup norm, based on the intuition that when it is a contraction DQL should behave stably, and when it is an expansion we should expect divergence. 

Based on our analysis, we design an algorithm which is intended to approximately ensure that the $Q$-function update is non-expansive. Our algorithm, which we call Preconditioned Q-Networks (PreQN) is computationally expensive but theoretically simple: it works by preconditioning the TD-errors in minibatch gradient updates, using the inverse of a matrix of inner products of $Q$-function gradients. We demonstrate that PreQN is stable and performant on a standard slate of MuJoCo benchmarks from the OpenAI Gym \cite{Brockman2016}, despite using none of the tricks typically associated with DQL. We also find a neat connection between PreQN and natural gradient \cite{Amari1998} methods, where under some slightly restricted conditions, the PreQN update is equivalent to a natural gradient Q-learning update. This connection explains a result noted by \citet{Knight2018}: that natural gradient Q-learning appeared to be stable without target networks.


\section{Preliminaries}

\subsection{Contraction Maps and Fixed Points}

We begin with a brief mathematical review. Let $X$ be a vector space with norm $\|\cdot\|$, and $f$ a function from $X$ to $X$. If $\forall x, y \in X$, $f$ satisfies
\begin{equation}
\|f(x) - f(y)\| \leq \beta \|x - y\| \label{contraction}
\end{equation}
with $\beta \in [0,1)$, then $f$ is called a contraction map with modulus $\beta$. If $f$ satisfies Eq \ref{contraction} but with $\beta=1$, then $f$ is said to be a non-expansion. 

By the Banach fixed-point theorem, if $f$ is a contraction, there is a unique fixed-point $x$ such that $f(x)=x$, and it can be obtained by the repeated application of $f$: for any point $x_0 \in X$, if we define a sequence of points $\{x_n\}$ such that $x_n = f(x_{n-1})$, $\lim_{n\to\infty} x_n = x$.

\subsection{Q Functions and TD-Learning}
DQL algorithms learn control policies in the reinforcement learning (RL) setting with the infinite-horizon discounted return objective. They attempt to learn an approximator to the optimal action-value function $Q^*$, which is known to satisfy the optimal Bellman equation:
\begin{equation}
Q^*(s,a) = \underE{s' \sim P}{R(s,a,s') + \gamma \max_{a'} Q^* (s', a')}. \label{bellman}
\end{equation}
Here, $s$ and $s'$ are states, $a$ and $a'$ are actions, $P$ is the transition kernel for the environment, $R$ is the reward function, and $\gamma \in (0,1)$ is the discount factor. The optimal policy $\pi^*$ can be obtained as the policy which is greedy with respect to $Q^*$: $\pi^*(s) = \arg \max_{a} Q^*(s,a)$. Thus, DQL algorithms approximate the optimal policy as the policy which is greedy with respect to the $Q^*$-approximator.

Let $\calT^* : \calQ \to \calQ$ be the operator on Q functions with $\calT^*Q (s,a)$ given by the RHS of Eq \ref{bellman}; then Eq \ref{bellman} can be written as $Q^* = \calT^* Q^*$.  The operator $\calT^*$ is called the optimal Bellman operator, and it is a contraction in the sup norm with modulus $\gamma$. When the $Q$-function can be represented as a finite table and the reward function and transition kernel are fully known, $\calT^*$ can be computed exactly and $Q^*$ can be obtained by \textit{value iteration}: $Q_{k+1} = \calT^*Q_k$. The convergence of value iteration from any initial point $Q_0$ is guaranteed by the Banach fixed-point theorem.

When the reward function and transition kernel are not fully known, it is still possible to learn $Q^*$ in the tabular setting via $Q$-learning \cite{Watkins1992}. $Q$-learning updates the $Q$-values of state-action pairs as they are visited by some exploration policy according to:
\begin{equation}
Q_{k+1}(s,a) = Q_k (s,a) + \alpha_k \left( \hat{\calT}^*Q_k (s,a) - Q_k(s,a)\right), \label{watkins}
\end{equation}
where $\hat{\calT}^* Q_k(s,a) = r + \gamma \max_{a'} Q_k(s',a')$ is a sample estimate for $\calT^* Q_k (s,a)$ using the reward and next state obtained from the environment while exploring. Under mild conditions on state-action visitation (all pairs must be visited often enough) and learning rates $\alpha_k$ (they must all gradually go to zero and always lie in $[0,1)$), $Q$-learning converges to $Q^*$. $Q$-learning is called a \textit{temporal difference} (TD) algorithm because the updates to $Q$-values are based on the temporal difference error:
\begin{align*}
\delta_t &= \hat{\calT}^* Q(s_t,a_t) - Q(s_t,a_t) \\
&= r_t + \gamma \max_{a'} Q(s_{t+1},a') - Q(s_t,a_t).
\end{align*}

\section{Towards Characterizing Divergence in Deep Q-Learning}

Deep Q-Learning (DQL) algorithms are based on the generalization of Eq \ref{watkins} to the function approximation setting:
\begin{equation}
\theta' = \theta + \alpha \left(\hat{\calT}^*Q_{\theta}(s,a) - Q_{\theta}(s,a)\right) \nabla_{\theta}Q_{\theta}(s,a), \label{approxqlearning}
\end{equation}
where $Q_{\theta}$ is a differentiable function approximator with parameters $\theta$, and $\theta'$ are the parameters after an update. Note that when $Q_{\theta}$ is a table, Eq \ref{approxqlearning} reduces exactly to Eq \ref{watkins}.

Typically, DQL algorithms make use of experience replay \cite{Lin1992} and minibatch gradient descent \cite{Mnih2013}, resulting in an expected update:
\begin{equation}
\theta' = \theta + \alpha \underE{s,a \sim \rho}{ \left(\calT^*Q_{\theta}(s,a) - Q_{\theta}(s,a)\right) \nabla_{\theta}Q_{\theta}(s,a)}, \label{deepq}
\end{equation}
where $\rho$ is the distribution of experience in the replay buffer at the time of the update. For stability, it is conventional to replace the bootstrap, the $\calT^* Q_{\theta}$ term, with one based on a slowly-updated target network: $\calT^*Q_{\psi}$, where the parameters $\psi$ are either infrequently copied from $\theta$ \cite{Mnih2013} or obtained by Polyak averaging $\theta$ \cite{Lillicrap2016}. However, we will omit target networks from our analysis.

Unlike Q-learning, DQL in its standard form currently has no known convergence guarantees, although some convergence results \cite{Yang2019} have been obtained for a closely-related algorithm called Neural-Fitted Q Iteration \cite{Riedmiller2005} when used with deep ReLU networks. 

\subsection{Taylor Expansion Analysis of Q}

To gain a deeper understanding of the behavior of DQL, we study the change in Q-values following an update based on Eq \ref{deepq}, by examining the Taylor expansion of $Q$ around $\theta$ at a state-action pair $(\bar{s},\bar{a})$. The Taylor expansion is
\begin{equation}
Q_{\theta'}(\bar{s},\bar{a}) = Q_{\theta}(\bar{s}, \bar{a}) + \nabla_{\theta} Q_{\theta}(\bar{s},\bar{a})^T (\theta' - \theta) + \calO\left(\|\theta'-\theta\|^2\right), \label{taylor}
\end{equation}
and by plugging Eq \ref{deepq} into Eq \ref{taylor}, we obtain:
\begin{align}
Q_{\theta'}(\bar{s},&\bar{a}) = \\
& Q_{\theta}(\bar{s}, \bar{a}) \nonumber \\
& + \alpha \underE{s,a\sim\rho}{k_{\theta}(\bar{s},\bar{a},s,a) \left(\calT^*Q_{\theta}(s,a) - Q_{\theta}(s,a)\right)} \nonumber \\
& + \calO\left(\|\alpha g\|^2\right), \label{deepqtaylor}
\end{align}
where  
\begin{equation}
k_{\theta}(\bar{s},\bar{a},s,a)\doteq \nabla_{\theta} Q_{\theta}(\bar{s}, \bar{a})^T \nabla_{\theta} Q_{\theta}(s,a) \label{kernelcomponent}
\end{equation}
is the neural tangent kernel (NTK) \cite{Jacot2018}, and $\alpha g = \theta' - \theta$. It is instructive to look at Eq \ref{deepqtaylor} for the case of finite state-action spaces, where we can consider its matrix-vector form. 

\begin{theorem} For Q-learning with nonlinear function approximation based on the update in Eq \ref{deepq}, when the state-action space is finite and the $Q$ function is represented as a vector in $\Real{|S||A|}$, the $Q$-values before and after an update are related by:
\begin{equation}
Q_{\theta'} = Q_{\theta} + \alpha K_{\theta} D_{\rho} \left(\calT^*Q_{\theta} - Q_{\theta} \right) + \calO(\|\alpha g\|^2), \label{deadlytriad}
\end{equation}
where $K_{\theta}$ is the $|S||A|\times|S||A|$ matrix of entries given by Eq \ref{kernelcomponent}, and $D_{\rho}$ is a diagonal matrix with entries given by $\rho(s,a)$, the distribution from the replay buffer.
\end{theorem}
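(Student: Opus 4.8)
The statement is essentially a vectorization of the per-component Taylor expansion already established in Eq \eqref{deepqtaylor}, so the plan is a bookkeeping argument rather than a new estimate. First I would fix an arbitrary state-action pair $(\bar s,\bar a)$ and start from Eq \eqref{deepqtaylor}, which was obtained by substituting the expected update Eq \eqref{deepq} into the first-order Taylor expansion Eq \eqref{taylor}. Writing $\alpha g = \theta' - \theta$ where $g \doteq \underE{s,a\sim\rho}{(\calT^*Q_{\theta}(s,a) - Q_{\theta}(s,a))\,\nabla_{\theta}Q_{\theta}(s,a)}$ is the expected TD-weighted gradient, Eq \eqref{deepqtaylor} reads
\begin{equation}
Q_{\theta'}(\bar s,\bar a) = Q_{\theta}(\bar s,\bar a) + \alpha \underE{s,a\sim\rho}{k_{\theta}(\bar s,\bar a,s,a)\bigl(\calT^*Q_{\theta}(s,a) - Q_{\theta}(s,a)\bigr)} + \calO\!\left(\|\alpha g\|^2\right). \nonumber
\end{equation}

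Next I would use finiteness of $\calS\times\calA$ to turn the expectation into a finite sum: $\underE{s,a\sim\rho}{f(s,a)} = \sum_{(s,a)} \rho(s,a)\, f(s,a)$. Hence the middle term equals $\alpha \sum_{(s,a)} k_{\theta}(\bar s,\bar a,s,a)\,\rho(s,a)\,\bigl(\calT^*Q_{\theta}(s,a) - Q_{\theta}(s,a)\bigr)$. Now define the $|S||A|\times|S||A|$ matrix $K_{\theta}$ with entries $[K_{\theta}]_{(\bar s,\bar a),(s,a)} = k_{\theta}(\bar s,\bar a,s,a) = \nabla_{\theta}Q_{\theta}(\bar s,\bar a)^T\nabla_{\theta}Q_{\theta}(s,a)$ as in Eq \eqref{kernelcomponent}, and the diagonal matrix $D_{\rho}$ with $[D_{\rho}]_{(s,a),(s,a)} = \rho(s,a)$. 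Reading $Q_{\theta}$ and $\calT^*Q_{\theta}$ as vectors in $\Real{|S||A|}$, the sum above is exactly the $(\bar s,\bar a)$ component of the matrix-vector product $\alpha K_{\theta} D_{\rho}\bigl(\calT^*Q_{\theta} - Q_{\theta}\bigr)$. Stacking this identity over all $(\bar s,\bar a) \in \calS\times\calA$ yields Eq \eqref{deadlytriad}, modulo the remainder terms.

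Finally I would collect the error terms. Each component carries a $\calO(\|\theta'-\theta\|^2) = \calO(\|\alpha g\|^2)$ remainder; since there are only finitely many ($|S||A|$) components and all norms on a finite-dimensional space are equivalent, the vector formed by these remainders is itself $\calO(\|\alpha g\|^2)$, so it can be written as a single $\calO(\|\alpha g\|^2)$ term as in the statement. I do not anticipate a genuine obstacle here: the only points requiring a little care are (i) making the identification $\theta'-\theta = \alpha g$ explicit so that the big-$\calO$ in Eq \eqref{taylor} matches the one in Eq \eqref{deadlytriad}, and (ii) noting that the componentwise-to-vector passage for the remainder uses finiteness of the state-action space, which is exactly the hypothesis invoked. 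One may optionally remark that smoothness of $Q_\theta$ in $\theta$ (e.g. twice continuous differentiability, as holds for the networks considered) is what legitimizes the Taylor expansion underlying Eq \eqref{taylor} in the first place.
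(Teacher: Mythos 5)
Your proposal is correct and matches the paper's treatment: the paper derives the componentwise expansion Eq \ref{deepqtaylor} in the main text by substituting Eq \ref{deepq} into Eq \ref{taylor}, and Theorem 1 is simply its matrix--vector form over the finite state-action space, exactly the vectorization you carry out. Your extra care in collecting the per-component remainders into a single $\calO(\|\alpha g\|^2)$ term is a reasonable (and slightly more explicit) version of what the paper leaves implicit.
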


Although extremely simple, we believe that Eq \ref{deadlytriad} is illuminating because it shows the connection between the deadly triad and the thing we really care about: the $Q$-values themselves. At leading order,
\begin{itemize}
\item the $K_{\theta}$ term is the contribution from function approximation, with its off-diagonal terms creating generalization across state-action pairs,
\item the $D_{\rho}$ term is the contribution from the off-policy data distribution,
\item the $\calT^*Q_{\theta}$ term is the contribution from bootstrapping,
\end{itemize}
and these terms interact by multiplication. The form of Eq \ref{deadlytriad} suggests that a useful way to think about the stability and convergence of deep Q-learning is to reason about whether the leading-order update operator $\calU : \calQ \to \calQ$ with values
\begin{equation}
\calU Q_{\theta} = Q_{\theta} + \alpha K_{\theta} D_{\rho} \left(\calT^*Q_{\theta} - Q_{\theta} \right) \label{dqlupdate}
\end{equation}
is or is not a contraction on $\calQ$. In what follows, we will develop an intuition for such conditions by considering a sequence of operators that incrementally introduce the components of $\calU$. After building intuition for failure modes, we will consider how prior methods try to repair or mitigate them. We contend that prior work in DQL predominantly focuses on the data distribution or the bootstrap, with limited exploration of the contribution from $K_{\theta}$ to instability. This analysis inspires PreQN, our new algorithm which attempts to repair divergence issues by cancelling out within-batch generalization errors created by $K_{\theta}$.

\subsection{Building Intuition for Divergence}

The aim of this section is to understand how the update $\calU : \calQ \to \calQ$ given by Eq \ref{dqlupdate} can give rise to instability in deep Q-learning, and how that instability might be repaired. To begin with, consider the operator $\calU_1$ given by
\begin{equation}
\calU_1 Q = Q + \alpha \left(\calT^*Q - Q\right). \label{update1}
\end{equation}

\begin{restatable}{lemma}{lemuone} For $\alpha \in (0,1)$, $\calU_1$ given by Eq \ref{update1} is a contraction on $\calQ$ in the sup norm, and its fixed-point is $Q^*$.
\end{restatable}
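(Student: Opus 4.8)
The plan is to rewrite $\calU_1$ as a convex combination of the identity map and the optimal Bellman operator $\calT^*$, and then to piggyback on the contraction property of $\calT^*$ recalled in the preliminaries. Concretely, since
\begin{equation*}
\calU_1 Q = (1-\alpha) Q + \alpha\, \calT^* Q ,
\end{equation*}
for any $Q_1, Q_2 \in \calQ$ I would subtract, group the two pieces, and apply the triangle inequality to get
\begin{equation*}
\|\calU_1 Q_1 - \calU_1 Q_2\|_{\infty} \leq (1-\alpha)\|Q_1 - Q_2\|_{\infty} + \alpha \|\calT^* Q_1 - \calT^* Q_2\|_{\infty}.
\end{equation*}

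The next step is to invoke the stated fact that $\calT^*$ is a sup-norm contraction with modulus $\gamma$, i.e. $\|\calT^* Q_1 - \calT^* Q_2\|_{\infty} \leq \gamma \|Q_1 - Q_2\|_{\infty}$. Plugging this in yields
\begin{equation*}
\|\calU_1 Q_1 - \calU_1 Q_2\|_{\infty} \leq \bigl(1 - \alpha(1-\gamma)\bigr)\|Q_1 - Q_2\|_{\infty},
\end{equation*}
and since $\alpha \in (0,1)$ and $\gamma \in (0,1)$ the modulus $\beta \doteq 1 - \alpha(1-\gamma)$ satisfies $\beta \in (0,1)$, so $\calU_1$ meets the definition of a contraction in Eq \ref{contraction}.

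For the fixed point, I would observe that $\calU_1 Q = Q$ is equivalent to $\alpha(\calT^* Q - Q) = 0$, and because $\alpha \neq 0$ this holds if and only if $\calT^* Q = Q$, i.e. if and only if $Q = Q^*$, using that $Q^*$ is the unique fixed point of $\calT^*$. Equivalently, $\calT^* Q^* = Q^*$ gives $\calU_1 Q^* = Q^*$ directly, and the Banach fixed-point theorem (now applicable since $\calU_1$ is a contraction) guarantees this fixed point is unique. I do not expect a genuine obstacle here; the only points deserving a word of care are that the triangle-inequality step uses $\alpha \in [0,1]$ so that the coefficients $1-\alpha$ and $\alpha$ are nonnegative (which is given), and that $\calQ$ is taken to be a complete normed space under the sup norm (bounded functions on the finite state–action set), so that Banach's theorem applies.
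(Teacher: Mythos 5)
Your proof is correct and follows essentially the same route as the paper's: decompose $\calU_1$ as the convex combination $(1-\alpha)Q + \alpha\,\calT^*Q$, apply the triangle inequality and the $\gamma$-contraction property of $\calT^*$ to obtain the modulus $1-(1-\gamma)\alpha < 1$, and deduce the fixed point from $\calT^*Q^* = Q^*$. Your additional remarks on uniqueness via Banach and on completeness of $\calQ$ are fine but not needed beyond what the paper states.
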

Proof for this and all other results in appendix. With sampling, $\calU_1$ would be essentially the same operator as used in tabular Q-learning \cite{Watkins1992}, and it would benefit from similar performance guarantees. This gives us intuition point 1:

\textbf{Intuition 1:} When $\calU$ more closely resembles $\calU_1$, we should expect deep Q-learning to behave more stably.

Next, we consider the operator $\calU_2$ given by
\begin{equation}
\calU_2 Q = Q + \alpha D_{\rho}\left(\calT^*Q - Q\right), \label{update2}
\end{equation}
where $D_{\rho}$ is a diagonal matrix with entries $\rho(s,a)$, a probability mass function on state-action pairs.

\begin{restatable}{lemma}{lemutwo} If $\rho(s,a) > 0$ for all $s,a$ and $\alpha \in (0, 1/\rho_{max})$ where $\rho_{max} = \max_{s,a} \rho(s,a)$, then $\calU_2$ given by Eq \ref{update2} is a contraction in the sup norm and its fixed-point is $Q^*$. If there are any $s,a$ such that $\rho(s,a)=0$  and $\alpha \in (0,1/\rho_{max})$, however, it is a non-expansion in $Q$ and not a contraction.
\end{restatable}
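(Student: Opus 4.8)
The plan is to argue component-wise in the sup norm, leaning on the fact (stated above) that $\calT^*$ is a $\gamma$-contraction. Take arbitrary $Q_1, Q_2 \in \calQ$ and subtract the updates: $\calU_2 Q_1 - \calU_2 Q_2 = (I - \alpha D_\rho)(Q_1 - Q_2) + \alpha D_\rho(\calT^* Q_1 - \calT^* Q_2)$. Evaluating the $(s,a)$-component and abbreviating $\rho = \rho(s,a)$, this is $(1 - \alpha\rho)\bigl(Q_1(s,a) - Q_2(s,a)\bigr) + \alpha\rho\bigl(\calT^* Q_1(s,a) - \calT^* Q_2(s,a)\bigr)$. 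The assumption $\alpha \in (0, 1/\rho_{max})$ forces $\alpha\rho \in [0,1)$, so the scalar weights $1 - \alpha\rho$ and $\alpha\rho$ are nonnegative and sum to one; the component is a convex combination, and the triangle inequality together with $|Q_1(s,a) - Q_2(s,a)| \le \|Q_1 - Q_2\|_\infty$ and $|\calT^*Q_1(s,a) - \calT^*Q_2(s,a)| \le \gamma\|Q_1 - Q_2\|_\infty$ yields $|[\calU_2 Q_1 - \calU_2 Q_2](s,a)| \le \bigl(1 - \alpha\rho(1-\gamma)\bigr)\|Q_1 - Q_2\|_\infty$.

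For the first case, suppose $\rho(s,a) > 0$ for all $(s,a)$ and set $\rho_{min} = \min_{s,a}\rho(s,a) > 0$. Then $1 - \alpha\rho(s,a)(1-\gamma) \le 1 - \alpha\rho_{min}(1-\gamma)$ uniformly, so taking the maximum over $(s,a)$ gives $\|\calU_2 Q_1 - \calU_2 Q_2\|_\infty \le \beta\|Q_1 - Q_2\|_\infty$ with $\beta = 1 - \alpha\rho_{min}(1-\gamma) \in (0,1)$, establishing the contraction. For the fixed point, the Banach theorem gives uniqueness, and since $\calT^*Q^* = Q^*$ we have $\calU_2 Q^* = Q^* + \alpha D_\rho \cdot 0 = Q^*$, so the fixed point is $Q^*$.

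For the second case, suppose $\rho(s_0,a_0) = 0$ for some pair. The estimate above still has per-component coefficient at most $1$ (equal to $1$ at $(s_0,a_0)$, strictly less wherever $\rho > 0$), so $\calU_2$ is a non-expansion. To see it is not a contraction, observe that $[\calU_2 Q](s_0,a_0) = Q(s_0,a_0)$ exactly, because the update term there is multiplied by $\alpha\rho(s_0,a_0) = 0$, independently of how $\calT^*Q$ behaves elsewhere. Choosing $Q_1$ and $Q_2$ that agree everywhere except at $(s_0,a_0)$, where they differ by some $c \neq 0$, gives $\|Q_1 - Q_2\|_\infty = |c|$ and $|[\calU_2 Q_1 - \calU_2 Q_2](s_0,a_0)| = |c|$, hence $\|\calU_2 Q_1 - \calU_2 Q_2\|_\infty = \|Q_1 - Q_2\|_\infty$, so no modulus $\beta < 1$ can hold.

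The convex-combination and triangle-inequality estimate is routine; the only point needing care is the second case, where one must confirm that perturbing $Q$ at the unvisited pair $(s_0,a_0)$ truly leaves $[\calU_2 Q](s_0,a_0)$ unchanged regardless of downstream effects on $\calT^* Q$ — which is immediate from the diagonal structure of $D_\rho$ and $\rho(s_0,a_0) = 0$ — and then recognizing that this, combined with the already-established non-expansion, pins the ratio at exactly $1$.
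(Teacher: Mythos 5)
Your proof is correct and follows essentially the same route as the paper's: the same component-wise convex-combination decomposition with weights $1-\alpha\rho(s,a)$ and $\alpha\rho(s,a)$, the same modulus $1-(1-\gamma)\alpha\rho_{min}$, and the same observation that $[\calU_2 Q](s_0,a_0)=Q(s_0,a_0)$ when $\rho(s_0,a_0)=0$ to rule out a contraction. Your explicit construction of $Q_1,Q_2$ differing only at the unvisited pair just makes concrete what the paper leaves as ``there are choices of $Q_1, Q_2$.''
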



By considering $\calU_2$, we see how the data distribution can have an impact: as long as the exploration policy touches all state-action pairs often enough, $\calU_2$ behaves well, but missing data poses a problem. The $Q$-values for missing state-action pairs will never change from their initial values, and bootstrapping will cause those erroneous values to influence the $Q$-values for `downstream' state-action pairs. This leads us to our second point of intuition:

\textbf{Intuition 2:} When data is scarce, deep Q-learning may struggle, and initial conditions will matter more.

Data is scarcest at the beginning of training; empirical results from \citet{VanHasselt2018} suggest that this is when DQL is most susceptible to divergence.

Next, we consider the operator $\calU_3$ given by
\begin{equation}
\calU_3 Q = Q + \alpha K D_{\rho}\left(\calT^*Q - Q\right), \label{update3}
\end{equation}
where $K$ is a constant symmetric, positive-definite matrix. Interestingly, $\calU_3$ corresponds exactly to the expected update for the case of linear function approximation, which we make precise in the next lemma:

\begin{restatable}{lemma}{lemuthree} For $Q$-learning with linear function approximators of the form  $Q_{\theta}(s,a) = \theta^T \phi(s,a)$ and updates based on Eq. \ref{deepq}, under the same conditions as Theorem 1, the $Q$-values before and after an update are related by
\begin{equation}
Q_{\theta'} = \calU_3 Q_{\theta}, \label{lindeadlytriad}
\end{equation}
where $K(\bar{s},\bar{a},s,a) = \phi(\bar{s},\bar{a})^T \phi(s,a)$. Eq. \ref{lindeadlytriad} differs from Eq. \ref{deadlytriad} in that there are no higher-order terms, and $K$ is constant with respect to $\theta$.
\end{restatable}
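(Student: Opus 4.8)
The plan is to observe that for a linear approximator the entire Taylor analysis leading to Eq.~\ref{deadlytriad} becomes exact, so the lemma is essentially a specialization of Theorem~1 with the remainder term and the $\theta$-dependence of the kernel both removed. First I would record the two structural facts about $Q_\theta(s,a)=\theta^T\phi(s,a)$: its gradient $\nabla_\theta Q_\theta(s,a)=\phi(s,a)$ is independent of $\theta$, and all of its second- and higher-order derivatives in $\theta$ vanish. The first fact, substituted into Eq.~\ref{kernelcomponent}, gives $k_\theta(\bar s,\bar a,s,a)=\phi(\bar s,\bar a)^T\phi(s,a)$, a fixed Gram kernel, which establishes the claim that $K$ is constant with respect to $\theta$. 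The second fact makes the Taylor expansion Eq.~\ref{taylor} hold with zero remainder, which gives the claim that there are no higher-order terms.

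With those two facts in hand, the identity $Q_{\theta'}=\calU_3 Q_\theta$ is a direct computation. Plugging the linear gradient into Eq.~\ref{deepq} gives $\theta'=\theta+\alpha\,\underE{s,a\sim\rho}{(\calT^*Q_\theta(s,a)-Q_\theta(s,a))\,\phi(s,a)}$, and then, using linearity of the approximator and of expectation,
\[
Q_{\theta'}(\bar s,\bar a)=(\theta')^T\phi(\bar s,\bar a)=Q_\theta(\bar s,\bar a)+\alpha\,\underE{s,a\sim\rho}{\big(\calT^*Q_\theta(s,a)-Q_\theta(s,a)\big)\,\phi(\bar s,\bar a)^T\phi(s,a)}.
\]
Recognizing $\phi(\bar s,\bar a)^T\phi(s,a)=K(\bar s,\bar a,s,a)$ and writing the expectation over the finite state-action space as the sum $\sum_{s,a}\rho(s,a)(\cdot)$, the right-hand side is exactly the $(\bar s,\bar a)$ entry of $Q_\theta+\alpha K D_\rho(\calT^*Q_\theta-Q_\theta)$, i.e.\ of $\calU_3 Q_\theta$ as defined in Eq.~\ref{update3}. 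This is the same manipulation used to pass from Eq.~\ref{deepqtaylor} to Eq.~\ref{deadlytriad}, now performed without any approximation.

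I do not expect a genuine obstacle here; the only care needed is bookkeeping. One should check that the Gram matrix $K$ meets the standing hypothesis on $\calU_3$ (symmetric and positive semidefinite always, positive definite exactly when the features $\{\phi(s,a)\}$ are linearly independent --- the regime of interest for an exact linear representation over a finite space), and one should be careful that the conversion from expectation to matrix form places $D_\rho$ on the ``inner'' index of the kernel, as in Theorem~1, rather than on the ``outer'' index $(\bar s,\bar a)$. The conceptual content is simply that linearity kills both the $\calO(\|\alpha g\|^2)$ term and the state-dependence of the kernel, so the heuristic leading-order operator $\calU$ coincides with the exact update operator in the linear case.
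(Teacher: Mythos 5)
Your proposal is correct and follows essentially the same route as the paper's proof: observe that $\nabla_\theta Q_\theta(s,a)=\phi(s,a)$ is independent of $\theta$ (so the NTK is the constant Gram kernel) and that all second- and higher-order derivatives vanish (so the Taylor expansion of Theorem~1 is exact). The explicit substitution you carry out and the remark on positive-definiteness of $K$ are just added detail beyond the paper's terser argument, not a different approach.
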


We now consider when $\calU_3$ is a contraction in the sup norm. 
\begin{restatable}{theorem}{thmtwo}
Let indices $i$, $j$ refer to state-action pairs. Suppose that $K$ and $\rho$ satisfy the conditions:
\begin{align}
\forall i, & \;\;\;\;\; \alpha K_{ii} \rho_i < 1, \label{overshoot}\\
\forall i, & \;\;\;\;\; (1+\gamma) \sum_{j\neq i} |K_{ij}| \rho_j \leq (1-\gamma) K_{ii}\rho_i. \label{diagdom}
\end{align}
Then $\calU_3$ is a contraction on $\calQ$ in the sup norm, with fixed-point $Q^*$.
\end{restatable}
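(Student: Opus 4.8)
The plan is to prove the contraction estimate directly from the definition of $\calU_3$, working componentwise in the sup norm, and then to identify the fixed point separately. Fix $Q, Q' \in \calQ$ and write $\Delta = Q - Q'$ and $M = \calT^* Q - \calT^* Q'$; since $\calT^*$ is a $\gamma$-contraction in the sup norm we have $\|M\|_\infty \leq \gamma \|\Delta\|_\infty$. Subtracting the two update equations gives $\calU_3 Q - \calU_3 Q' = (I - \alpha K D_\rho)\Delta + \alpha K D_\rho M$, so the $i$-th coordinate equals $\Delta_i - \alpha \sum_j K_{ij}\rho_j \Delta_j + \alpha \sum_j K_{ij}\rho_j M_j$.

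The key step is to peel off the diagonal ($j=i$) contribution and bound the remainder by the triangle inequality. Condition (\ref{overshoot}), together with $K_{ii} > 0$ from positive definiteness of $K$, gives $0 < 1 - \alpha K_{ii}\rho_i \leq 1$, so the diagonal part satisfies $|(1 - \alpha K_{ii}\rho_i)\Delta_i + \alpha K_{ii}\rho_i M_i| \leq (1 - \alpha K_{ii}\rho_i)\|\Delta\|_\infty + \alpha \gamma K_{ii}\rho_i \|\Delta\|_\infty$, while the off-diagonal part contributes at most $\alpha \sum_{j\neq i}|K_{ij}|\rho_j \big(\|\Delta\|_\infty + \|M\|_\infty\big) \leq \alpha(1+\gamma)\big(\sum_{j\neq i}|K_{ij}|\rho_j\big)\|\Delta\|_\infty$. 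Collecting terms yields $|(\calU_3 Q - \calU_3 Q')_i| \leq \beta_i \|\Delta\|_\infty$ with $\beta_i = 1 - (1-\gamma)\alpha K_{ii}\rho_i + (1+\gamma)\alpha \sum_{j\neq i}|K_{ij}|\rho_j$, and condition (\ref{diagdom}) is precisely what forces $\beta_i \leq 1$. Because the state-action space is finite, $\beta := \max_i \beta_i$ is attained, and $\|\calU_3 Q - \calU_3 Q'\|_\infty \leq \beta \|\Delta\|_\infty$. The fixed point is then immediate: $\calT^* Q^* = Q^*$ makes the correction term in (\ref{update3}) vanish, so $\calU_3 Q^* = Q^*$, and uniqueness follows from the Banach fixed-point theorem.

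I expect the main obstacle to be establishing strictness, i.e. $\beta < 1$ rather than merely $\beta \leq 1$: the estimate above, using (\ref{diagdom}) as written with a weak inequality, only gives a non-expansion in general. Promoting it to a contraction requires a small case analysis over the finitely many rows: on a row with zero off-diagonal mass one has $\beta_i = 1 - (1-\gamma)\alpha K_{ii}\rho_i$, which is strictly below $1$ as soon as $\rho_i > 0$ (which is the natural standing assumption, as in Lemma 2), and on a row with nonzero off-diagonal mass one needs (\ref{diagdom}) to hold strictly. So the clean argument is: read (\ref{diagdom}) as strict on every row with positive off-diagonal mass (and assume $\rho_i>0$ throughout), conclude $\beta_i < 1$ for every $i$, and take the maximum.

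Everything else is routine: the diagonal/off-diagonal split, the use of the Bellman contraction to control $M$, and the identification of the fixed point via $\calT^* Q^* = Q^*$. The only conceptual content is recognizing that (\ref{overshoot}) is what keeps the self-coefficient $1 - \alpha K_{ii}\rho_i$ in $[0,1)$ (preventing a sign flip / overshoot), while (\ref{diagdom}) is a discounted diagonal-dominance condition that keeps the cross-generalization terms from $K$ small enough to preserve the sup-norm contraction that $\calT^*$ would have on its own.
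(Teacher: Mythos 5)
Your proof follows the same route as the paper's: a componentwise bound $|(\calU_3 Q - \calU_3 Q')_i| \leq \beta_i \|Q - Q'\|_\infty$ with $\beta_i = 1 - (1-\gamma)\alpha K_{ii}\rho_i + (1+\gamma)\alpha \sum_{j\neq i}|K_{ij}|\rho_j$, obtained by peeling off the diagonal term (where condition \ref{overshoot} lets you drop the absolute value on $1-\alpha K_{ii}\rho_i$) and using the $\gamma$-contraction of $\calT^*$ on the bootstrap term, followed by $\calT^* Q^* = Q^*$ for the fixed point. Your remark about strictness is correct and in fact flags a small slip in the paper's own argument: with the weak inequality in \ref{diagdom} the estimate only yields $\beta_i \leq 1$ (a non-expansion), with equality exactly when \ref{diagdom} is tight on a maximizing row, so the contraction claim requires either strict inequality in \ref{diagdom} or, as you propose, $\rho_i>0$ everywhere together with strictness on every row having nonzero off-diagonal mass.
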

To frame this discussion, we'll note that the condition in Eq. \ref{diagdom} is extremely restrictive. It requires that $\rho > 0$ everywhere, and that the off-diagonal terms of $K$ are very small relative to the on-diagonal terms (for typical choices of $\gamma$, eg $\gamma=0.99$). As a result, an analysis of this kind may not suffice to explain the success of linear TD-learning in typical use cases. But we nonetheless find this useful in motivating the following point of intuition:

\textbf{Intuition 3:} The stability of $Q$-learning is tied to the generalization properties of the $Q$-approximator. Approximators with more aggressive generalization (larger off-diagonal terms in $K_{\theta}$) are less likely to demonstrate stable learning.

So far, we have reasoned about several individual update operators, but we have not made explicit reference to the full dynamics of training with nonlinear function approximators. In deep Q-learning, both the kernel $K_{\theta}$ and the data distribution $\rho$ change between update steps. Thus, each step can be viewed as applying a different update operator. It is important to ask if our intuitions so far have any bearing in this setting; this is the subject of our next result.

\begin{restatable}{theorem}{thmthree} Consider a sequence of updates $\{\calU_0, \calU_1, ...\}$, with each $\calU_i : \calQ \to \calQ$ Lipschitz continuous, with Lipschitz constant $\beta_i$, with respect to a norm $\|\cdot\|$. Furthermore, suppose all $\calU_i$ share a common fixed-point, $\tilde{Q}$. Then for any initial point $Q_0$, the sequence of iterates produced by $Q_{i+1} = \calU_i Q_i$ satisfies:
\begin{equation}
\|\tilde{Q} - Q_i\| \leq \left(\prod_{k=0}^{i-1} \beta_k\right) \|\tilde{Q} - Q_0\|. \label{qconverge}
\end{equation}
Furthermore, if there is an iterate $j$ such that $\forall k \geq j, \; \beta_k \in [0,1)$, the sequence $\{Q_0, Q_1, ...\}$ converges to $\tilde{Q}$. 
\end{restatable}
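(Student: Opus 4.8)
The plan is to obtain the displayed inequality from a one-step estimate and then chain it by induction, and finally to deduce convergence from the resulting product bound. The one-step estimate is the heart of the argument: since $\tilde Q$ is a fixed point of every $\calU_i$ and $\calU_i$ is Lipschitz with constant $\beta_i$,
\[
\|\tilde Q - Q_{i+1}\| = \|\calU_i \tilde Q - \calU_i Q_i\| \le \beta_i \, \|\tilde Q - Q_i\|.
\]
This is where all three hypotheses enter — the common fixed point, the Lipschitz property, and the recursion $Q_{i+1} = \calU_i Q_i$ — and everything else is bookkeeping.

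Given this, I would prove Eq.~\ref{qconverge} by induction on $i$. The base case $i = 0$ is the trivial identity $\|\tilde Q - Q_0\| \le \|\tilde Q - Q_0\|$ with the empty product equal to $1$. For the inductive step, apply the one-step estimate to get $\|\tilde Q - Q_i\| \le \beta_{i-1}\|\tilde Q - Q_{i-1}\|$, then substitute the induction hypothesis for $\|\tilde Q - Q_{i-1}\|$ and collect the moduli into $\prod_{k=0}^{i-1}\beta_k$.

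For the convergence claim, I would factor $\prod_{k=0}^{i-1}\beta_k = \big(\prod_{k=0}^{j-1}\beta_k\big)\big(\prod_{k=j}^{i-1}\beta_k\big)$; the first factor is a fixed nonnegative constant, so it suffices that the tail product tends to $0$. Once the product tends to $0$, Eq.~\ref{qconverge} forces $\|\tilde Q - Q_i\| \to 0$. I expect this last step to be the only genuine subtlety: strictly speaking, $\beta_k \in [0,1)$ for all $k \ge j$ is not by itself enough, since the partial products $\prod_{k=j}^{i-1}\beta_k$ are decreasing and bounded below but could converge to a positive limit (e.g.\ $\beta_k = 1 - 2^{-k}$). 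The conclusion is correct under the mild strengthening that the $\beta_k$ are eventually bounded away from $1$ (more generally, that $\sum_k (1-\beta_k) = \infty$), in which case $\prod_{k=j}^{i-1}\beta_k \le \beta^{\,i-j} \to 0$ for some $\beta < 1$; I would state the convergence part under that hypothesis. The remainder is just the standard contraction-mapping chaining argument.
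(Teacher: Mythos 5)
Your argument for Eq.~\ref{qconverge} is exactly the paper's: the one-step estimate $\|\tilde Q - Q_i\| = \|\calU_{i-1}\tilde Q - \calU_{i-1}Q_{i-1}\| \le \beta_{i-1}\|\tilde Q - Q_{i-1}\|$, chained by induction into the product bound. Where you genuinely add value is in the convergence claim. The paper's proof simply asserts that $\lim_{N\to\infty}\prod_{k=j}^{N}\beta_k = 0$ whenever $\beta_k \in [0,1)$ for all $k \ge j$, and this assertion is false as stated: your counterexample $\beta_k = 1 - 2^{-k}$ (for which $\sum_k 2^{-k} < \infty$, so the infinite product converges to a strictly positive limit) shows that the partial products can fail to vanish, and hence Eq.~\ref{qconverge} alone does not force $\|\tilde Q - Q_i\| \to 0$. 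You are right that the convergence conclusion requires a strengthening such as $\sup_{k\ge j}\beta_k \le \beta < 1$, or more generally $\sum_{k}(1-\beta_k) = \infty$, under which the tail product is dominated by $\beta^{\,i-j}\to 0$ (or by $\exp\bigl(-\sum_{k=j}^{i-1}(1-\beta_k)\bigr)\to 0$). Since in the paper's intended application the moduli would be uniformly controlled, the theorem's spirit survives, but as literally stated the second claim has a gap that your version repairs; you should state the convergence part under the strengthened hypothesis as you propose.
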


Roughly speaking, this theorem says that if you sequentially apply different contraction maps with the same fixed-point, you will attain that fixed-point. 

In DQL, the common fixed-point between all update operators based on Eq. \ref{deepq} is $Q^*$. For neural network approximators commonly used in practice, such update operators are unlikely to be contractions and convergence to $Q^*$ is out of reach (especially considering that $Q^*$ may not even be expressible in the approximator class). Nonetheless, we view Theorem 3 as motivating our final point of intuition:

\textbf{Intuition 4:} Although the DQL update operator varies between steps, intuitions from the constant-update setting can provide useful guidance for understanding and repairing divergence issues in DQL.

To sum up, we enumerate and discuss the failure modes for DQL that appear likely based on our analysis so far. 

\textbf{Failure Mode 1: Linearization breaks.} The learning rate $\alpha$ is too high, second-order terms in Eq. \ref{deadlytriad} are large, and updates do not correlate with Bellman updates in any meaningful way. (Based on Theorem 1.)

\textbf{Failure Mode 2: Overshooting.} The learning rate $\alpha$ is small enough for the linearization to approximately hold, but is large enough that $\calU$ from Eq. \ref{dqlupdate} is sometimes an expansion. (Based on Theorem 2, Eq \ref{overshoot}.)

\textbf{Failure Mode 3: Over-generalization.} The kernel matrix $K_{\theta}$ has large off-diagonal terms, causing the $Q$ function to generalize too aggressively and making $\calU$ sometimes an expansion. (Based on Theorem 2, Eq \ref{diagdom}.)

\textbf{Failure Mode 4: Extrapolation error.} The data distribution used for the update is inadequate. $Q$-values for missing (or under-represented) state-action pairs are adjusted solely or primarily by generalization, which sometimes produces errors. Bootstrapping then propagates those errors through the $Q$-values for all other state-action pairs. (Based on Lemma 2.) This failure mode was previously identified, named, and studied empirically by \cite{Fujimoto2018a}.

It is important to note that these failure modes may not present in clearly-distinguishable ways: indeed, they can cascade into each other, creating feedback loops that lead to divergence. For instance, consider the interaction between over-generalization and extrapolation error. A network with limited generalization would keep the $Q$-values for missing state-action pairs close to their initial values. This would lead to inaccurate, but not necessarily divergent, downstream $Q$-values. On the other hand, a network that over-generalizes will significantly alter the $Q$-values for missing state-action pairs. A slight positive bias (where all of those $Q$-values increase) will get propagated to downstream $Q$-values due to extrapolation error, making them optimistic. But this reinforces the positive bias in the generalization to missing state-action pair $Q$-values---creating a feedback loop, and ultimately divergence.

\subsection{Interpreting Prior Work}

A substantial body of prior work on stabilizing DQL focuses on modifying either the data distribution or the TD-errors.

Data distribution-based methods include massive-scale experience collection, as in Gorila-DQN \cite{Nair2015}, Ape-X \cite{Horgan2018}, and R2D2 \cite{Kapturowski2019}, and methods for improved exploration, like entropy regularization \cite{Haarnoja2018}. We speculate that such methods improve stability in DQL primarily by mitigating extrapolation error, by reducing the number of missing state-action pairs. As an alternative to improved data collection, BCQ \cite{Fujimoto2018a} mitigates extrapolation error by simply preventing missing state-action pairs from being used to form the Bellman backup.

TD error-based methods include target networks \cite{Mnih2015}, clipped TD errors \cite{Mnih2015} (commonly implemented via the Huber loss function \cite{Sidor2017}), double DQN \cite{VanHasselt2016}, n-step backups \cite{Sutton1988, Hessel2017}, transformed Bellman operators \cite{Pohlen2018}, and clipped double-Q learning \cite{Fujimoto2018}. These methods do not directly attack specific failure modes, but we speculate that they interfere with error propagation by preventing bad updates from quickly spreading to downstream $Q$-values. This allows more time for bad updates to get averaged out, or for missing data to be collected.

Relatively little work focuses on over-generalization. Ape-X DQfD \cite{Pohlen2018} uses an auxilliary temporal consistency (TC) loss to prevent the $Q$-values of target state-action pairs from changing; ablation analysis suggested that the TC loss was critical to performance. Similarly, \cite{Durugkar2017} proposed Constrained Q-Learning, which uses a constraint to prevent the average target value from changing after an update; however, \cite{Pohlen2018} did not find evidence that this technique worked on complex problems.

To the best of our knowledge, no prior work addresses the root cause of overshooting or over-generalization failures in DQL: the neural tangent kernel, $K_{\theta}$. Work in this direction would either modify network architecture to result in a $K_{\theta}$ more favorable to stability, or modify the update rule in a way which controls the influence of $K_{\theta}$ on generalization. Dueling DQN \cite{Wang2016} does modify network architecture in a way which is known to improve training on Atari games, but there is currently no known theoretical explanation for its benefits. We speculate that an analysis based on $K_{\theta}$ may prove insightful, though we have not yet found a clear result on this despite preliminary effort. The general absence of work on DQL stability via $K_{\theta}$ is the inspiration for our algorithmic contributions.

\section{Preconditioned Q-Networks}

In this section, we will introduce Preconditioned Q-Networks (PreQN), an algorithm which is intended to approximately ensure that the $Q$-function update is a non-expansion. The core idea is to alter the DQL update so that it behaves as much as possible like Eq. \ref{update1} in $Q$-value space. 

Concretely, let $\Phi_{\theta} \in \Real{d \times |S||A|}$ denote the matrix whose columns are $\nabla_{\theta} Q_{\theta}(s,a)$. To first order, what we have is
\begin{equation}
Q_{\theta'} \approx Q_{\theta} + \Phi_{\theta}^T (\theta' - \theta), \label{have}
\end{equation}
and what we want is an update which results in
\begin{equation}
Q_{\theta'} \approx Q_{\theta} + \alpha \left( \calT^* Q_{\theta} - Q_{\theta}\right), \label{want}
\end{equation}
for some $\alpha \in (0,1)$. If $K_{\theta} = \Phi_{\theta}^T \Phi_{\theta}$ were invertible, then the update
\begin{equation}
\theta' = \theta + \alpha \Phi_{\theta} K_{\theta}^{-1} \left( \calT^* Q_{\theta} - Q_{\theta}\right) \label{theoryupdate}
\end{equation}
would attain Eq \ref{want}. This update is like a normal DQL update where the TD-errors have been replaced with \textit{preconditioned} TD-errors, where $K_{\theta}^{-1}$ is the preconditioner. In practice, there are three obstacles to directly implementing Eq \ref{theoryupdate}:
\begin{itemize}
\item For large or continuous state or action spaces (as in many problems of interest), it would be intractable or impossible to form and invert $K_{\theta}$.
\item If the number of state-action pairs is greater than the number of parameters, $K_{\theta}$ will be rank deficient and thus not invertible.
\item For nonlinear function approximators (as in DQL), step sizes must be selected to keep higher-order terms small.
\end{itemize}
To handle these issues, we propose a minibatch-based approximation to the algorithm in Eq \ref{theoryupdate}. Like in standard DQL, we maintain a replay buffer filled with past experiences. Each time we sample a minibatch $B$ from the replay buffer to compute an update, we form $K_{\theta}$ for the minibatch, find the least-squares solution $Z$ to
\begin{equation}
K_{\theta} Z = \calT^* Q_{\theta} - Q_{\theta} \label{preqn1}
\end{equation}
for the minibatch, and then compute a proposed update
\begin{equation}
\theta' = \theta + \alpha \sum_{(s,a) \in B} Z(s,a) \nabla_{\theta} Q_{\theta}(s,a). \label{preqnupdate}
\end{equation}
Finally, to ensure that the higher-order terms are small, we use a linesearch that starts at Eq \ref{preqnupdate} and backtracks (by exponential decay) to $\theta$. The acceptance criterion for the linesearch is
\begin{equation}
\cos\left( Q_{\theta'} - Q_{\theta}, \calT^*Q_{\theta} - Q_{\theta}\right) \geq \eta,
\end{equation}
where $\eta$ is a hyperparameter (close to, but less than, $1$). That is, a proposed update is only accepted if the resulting change in $Q$-values for the minibatch is well-aligned with its TD-errors. We refer to this algorithm as Preconditioned Q-Networks (PreQN). 

In our experiments, we consider the variant of PreQN styled after DDPG \cite{Lillicrap2016}, where a separate actor network is trained to allow efficient computation of $\max_{a} Q_{\theta}(s,a)$. We give the complete pseudocode as Algorithm \ref{alg1}. Note the omission of target networks: we hypothesize that the design of the algorithm makes instability less likely and thus makes target networks unnecessary.

\begin{algorithm}[t]
\small
   \caption{PreQN (in style of DDPG)}
   \label{alg1}
\begin{algorithmic}[1]
     \STATE Given: initial parameters $\theta, \phi$ for $Q, \mu$, empty replay buffer $\calD$
	 \STATE Receive observation $s_0$ from environment
	 \FOR{$t = 0,1,2,...$} 
	 \STATE Select action $a_t = \mu_{\phi}(s_t) + \calN_t$
	 \STATE Step environment to get $s_{t+1}, r_t$ and terminal signal $d_t$
	 \STATE Store $(s_t, a_t, r_t, s_{t+1}, d_t) \to \calD$
	 \IF{it's time to update}
	 \FOR{however many updates}
	 \STATE Sample minibatch $B = \{(s_i,a_i,r_i,s'_i,d_i)\}$ from $\calD$
	 \STATE For each transition in $B$, compute TD errors:
	 \begin{equation*}
	 \Delta_i = r_i + \gamma (1-d_i) Q_{\theta}(s'_i, \mu_{\phi}(s'_i)) - Q_{\theta}(s_i,a_i)
	 \end{equation*}
	 \STATE Compute minibatch $K_{\theta}$ matrix and find least-squares solution $Z$ to $K_{\theta} Z = \Delta$
	 \STATE Compute proposed update for $Q$ with:
	 \begin{equation*}
	 \theta' = \theta + \alpha_q \sum_{(s,a) \in B} Z(s,a) \nabla_{\theta} Q_{\theta}(s,a)
	 \end{equation*}
	 \STATE Exponentially decay $\theta'$ towards $\theta$ until
	 \begin{equation*}
	 \cos\left( Q_{\theta'} - Q_{\theta}, \calT^*Q_{\theta} - Q_{\theta}\right) \geq \eta,
	 \end{equation*}
	 then set $\theta \leftarrow \theta'$.
	 \STATE Update $\mu$ with:
	 \begin{equation*}
	 \phi \leftarrow \phi + \alpha_{\mu} \frac{1}{|B|}\sum_{s \in B} \nabla_{\phi} Q_{\theta}(s, \mu_{\phi}(s))
	 \end{equation*}
	 \ENDFOR
	 \ENDIF
	\ENDFOR
\end{algorithmic}
\end{algorithm}

\subsection{Connection to Natural Gradients}

As it turns out, PreQN is equivalent to natural gradient Q-learning (NGQL) when the same samples are used to form both the gradient and the Fisher information matrix. To recap, the update for NGQL is
\begin{equation}
\theta' = \theta + \alpha F^{-1}_{\theta} g, \label{ngql}
\end{equation}
where $g$ is the gradient from Eq \ref{deepq} and
\begin{equation}
F_{\theta} = \underE{s,a \sim \rho}{\nabla_{\theta} Q_{\theta}(s,a) \nabla_{\theta}Q_{\theta}(s,a)^T} \label{fisher}
\end{equation}
is the Fisher information matrix for a gaussian distribution, $\calN(Q_{\theta},I)$. When using sample estimates of the expectations, we can write the NGQL update in terms of the matrix $\Phi_{\theta}$ (the $d \times |S||A|$ matrix with columns $\nabla_{\theta}Q_{\theta}(s,a)$) and the vector of TD-errors $\Delta = \calT^*Q_{\theta} - Q_{\theta}$ as:
\begin{equation}
\theta' = \theta + \alpha (\Phi_{\theta} \Phi_{\theta}^T)^\dagger \Phi_{\theta} \Delta, \label{ngql1}
\end{equation}
where $(\Phi_{\theta} \Phi_{\theta}^T)^\dagger$ is the pseudoinverse of $\Phi_{\theta} \Phi_{\theta}^T$. Similarly, the PreQN update as described by Eqs \ref{preqn1} and \ref{preqnupdate} can be written as
\begin{equation}
\theta' = \theta + \alpha \Phi_{\theta} (\Phi_{\theta}^T \Phi_{\theta})^\dagger \Delta. \label{preqn2}
\end{equation}
By the following lemma, the two updates in Eqs \ref{ngql1} and \ref{preqn2} are equivalent:
\begin{restatable}{lemma}{lemufour} $(\Phi_{\theta} \Phi_{\theta}^T)^\dagger \Phi_{\theta} = \Phi_{\theta} (\Phi_{\theta}^T \Phi_{\theta})^\dagger$.
\end{restatable}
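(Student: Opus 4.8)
The plan is to reduce the identity to a diagonal computation using the singular value decomposition. Write $\Phi_\theta = U \Sigma V^\top$ with $U, V$ orthogonal and $\Sigma$ the (rectangular) matrix of singular values $\sigma_i \geq 0$. Then $\Phi_\theta \Phi_\theta^\top = U (\Sigma \Sigma^\top) U^\top$ and $\Phi_\theta^\top \Phi_\theta = V (\Sigma^\top \Sigma) V^\top$ are orthogonal diagonalizations of symmetric PSD matrices, so $(\Phi_\theta \Phi_\theta^\top)^\dagger = U (\Sigma \Sigma^\top)^\dagger U^\top$ and $(\Phi_\theta^\top \Phi_\theta)^\dagger = V (\Sigma^\top \Sigma)^\dagger V^\top$, where the pseudoinverse of an orthogonally-diagonalized matrix is obtained by inverting the nonzero eigenvalues in place. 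Substituting these into both sides of the claim and cancelling $U^\top U = I$, $V^\top V = I$, the left-hand side becomes $U \left[ (\Sigma \Sigma^\top)^\dagger \Sigma \right] V^\top$ and the right-hand side becomes $U \left[ \Sigma (\Sigma^\top \Sigma)^\dagger \right] V^\top$, so it suffices to prove $(\Sigma \Sigma^\top)^\dagger \Sigma = \Sigma (\Sigma^\top \Sigma)^\dagger$.

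That last identity is entrywise: $\Sigma \Sigma^\top$ is diagonal with entries $\sigma_i^2$, so $(\Sigma \Sigma^\top)^\dagger$ has entries $\sigma_i^{-2}$ on the nonzero positions, and right-multiplying by $\Sigma$ yields the rectangular diagonal matrix with entries $\sigma_i^{-1}$ where $\sigma_i > 0$ and $0$ elsewhere; the same computation on the other side (using $\Sigma^\top \Sigma$ and left-multiplication) gives the identical matrix. In other words both sides equal $\Sigma^\dagger$, which finishes the argument.

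A slicker alternative route, which I would mention, avoids the bookkeeping: recall the standard factorizations $\Phi_\theta^\dagger = (\Phi_\theta^\top \Phi_\theta)^\dagger \Phi_\theta^\top = \Phi_\theta^\top (\Phi_\theta \Phi_\theta^\top)^\dagger$, together with the facts that $(\Phi_\theta^\dagger)^\top = (\Phi_\theta^\top)^\dagger$ and that $M^\dagger$ is symmetric whenever $M$ is. Transposing the first factorization gives $(\Phi_\theta^\top)^\dagger = \Phi_\theta (\Phi_\theta^\top \Phi_\theta)^\dagger$, and transposing the second gives $(\Phi_\theta^\top)^\dagger = (\Phi_\theta \Phi_\theta^\top)^\dagger \Phi_\theta$; equating the right-hand sides is exactly the statement of the lemma. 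Taking this route, the only thing left to justify is the two factorizations of $\Phi_\theta^\dagger$, which follow from checking the four Moore–Penrose conditions (or, again, from the SVD).

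There is essentially no hard step here; the result is elementary linear algebra. The only mild care point is that the two pseudoinverses live on different spaces — $(\Phi_\theta \Phi_\theta^\top)^\dagger$ is $d \times d$ while $(\Phi_\theta^\top \Phi_\theta)^\dagger$ is $|S||A| \times |S||A|$ — and that the argument must not assume either $\Phi_\theta \Phi_\theta^\top$ or $\Phi_\theta^\top \Phi_\theta$ is invertible; the pseudoinverse handles the rank-deficient case automatically, which is precisely why this identity is the right tool for reconciling the NGQL and PreQN updates when $K_\theta$ is rank deficient.
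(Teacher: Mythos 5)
Your main argument is correct and is essentially the paper's own proof: both reduce the identity to a diagonal computation via the SVD (the paper uses the compact SVD with $U^\top U = V^\top V = I_r$ and $\Sigma \in \mathbb{R}^{r\times r}$, you use the full one; the difference is immaterial), and your alternative route via the standard factorizations of $\Phi_\theta^\dagger$ is also valid but is just a repackaging of the same fact. One cosmetic slip worth fixing: the common value of $(\Sigma\Sigma^\top)^\dagger\Sigma = \Sigma(\Sigma^\top\Sigma)^\dagger$ is the $m\times n$ matrix $(\Sigma^\top)^\dagger = (\Sigma^\dagger)^\top$, not $\Sigma^\dagger$ (which is $n\times m$), so both sides of the lemma equal $(\Phi_\theta^\top)^\dagger$ rather than $\Phi_\theta^\dagger$.
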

The connection between NGQL and approximately non-expansive $Q$-update operators may explain the finding by \cite{Knight2018} that NGQL did not require target networks to remain stable. A related observation was made by \cite{Schulman2017}, who showed that a natural policy gradient could be viewed as approximately applying a version of Eq \ref{update1} for entropy-regularized Q-learning. They also demonstrated a version of DQL that could learn stably without target networks.

\section{Experiments}

In our experiments, we investigated the following questions:
\begin{enumerate}
\item What insights can we obtain about the neural tangent kernel $K_{\theta}$ in the context of RL? Can we exploit empirical analyses of $K_{\theta}$ to make better decisions about neural network architecture?
\item How does PreQN behave? To evaluate performance, we compare to TD3 \cite{Fujimoto2018} and SAC \cite{Haarnoja2018} on various OpenAI Gym \cite{Brockman2016} environments.
\item To what degree does a standard DQL update push $Q$-values towards their targets? How does this change with architecture? How should we interpret PreQN results in light of this?
\end{enumerate}

\subsection{Neural Tangent Kernel Analysis}

Based on Theorem 2, we are motivated to empirically evaluate two properties of the neural tangent kernel that appear relevant to stability in DQL: the magnitudes of diagonal elements, and the degree of off-diagonalness. To measure the latter, we consider the ratio of the average off-diagonal row entry to the on-diagonal entry, $R_i$:
\begin{align*}
R_i (K) &\doteq \frac{1}{N}\frac{\sum_{j\neq i} |K_{ij}|}{K_{ii}},
\end{align*}
where $N$ is the size of the square matrix $K$. We refer to this quantity as the `row ratio.'

We evaluate the standard neural networks used in DQL for continuous control: namely, feedforward multi-layer perceptrons with between 1 and 4 hidden layers, and between 32 and 256 hidden units, with either tanh, relu, or sin activations. (See, eg, \cite{Lillicrap2016, Fujimoto2018, Haarnoja2018, Rajeswaran2017} for examples where networks in this range have previously been used.) Because divergence typically occurs near the beginning of training \cite{VanHasselt2018} and the NTK is known to converge to a constant in the infinite-width limit \cite{Jacot2018}, we focus only on the properties of $K_{\theta}$ at initialization in these experiments.

For each of three Gym environments (HalfCheetah-v2, Walker2d-v2, and Ant-v2), we sampled a dataset $\calD$ of 1000 state-action pairs using a ``rails-random" policy: $a = \sign(u), u \sim \text{Unif}(\calA)$. We then randomly initialized neural networks of various sizes and activation functions, computed $K_{\theta}$ for each using the state-action pairs in $\calD$, and evaluated their on-diagonal elements and average row ratios. We show partial results in Figure \ref{ntkfig}, complete results in Appendix \ref{completentk}, and summarize findings here:
\begin{itemize}
\item Diagonal elements tend to increase with width and decrease with depth, across activation functions.
\item Row ratios tend to increase with depth across activation functions, and do not clearly correlate with width.
\item Relu nets commonly have the largest on-diagonal elements and row ratios (so they should learn quickly and generalize aggressively). 
\item Sin networks appear to be in a ``sweet spot" of high on-diagonal elements and low off-diagonal elements. This analysis suggests sin activations may be more useful for DQL than has been previously realized.
\end{itemize}

Based on these results, as we will detail in subsequent sections, we experimented with using sin activations for TD3, SAC, and PreQN. 

\begin{figure}
\centering
\includegraphics[width=0.4\textwidth]{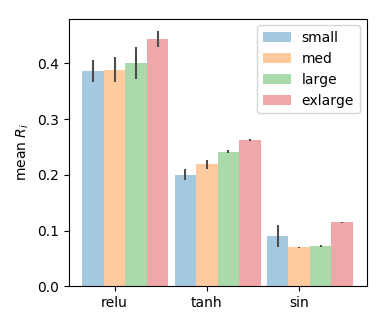}
\caption{Average row ratio for networks with 2 hidden layers of size 32 (small), 64 (med), 128 (large), and 256 (exlarge), using data from Walker2d-v2. Error bars are standard deviations from 3 random network initializations (with fixed data).}
\label{ntkfig}
\end{figure}

\subsection{Benchmarking PreQN}

\begin{figure*}[t]
\centering
\includegraphics[width=0.3\textwidth]{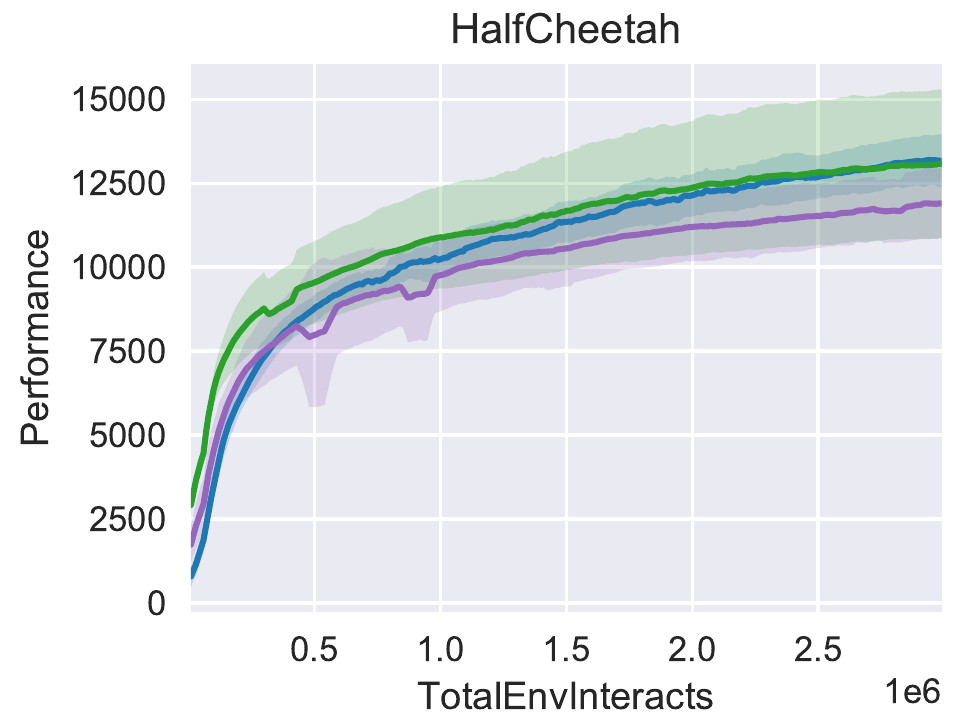}
\includegraphics[width=0.3\textwidth]{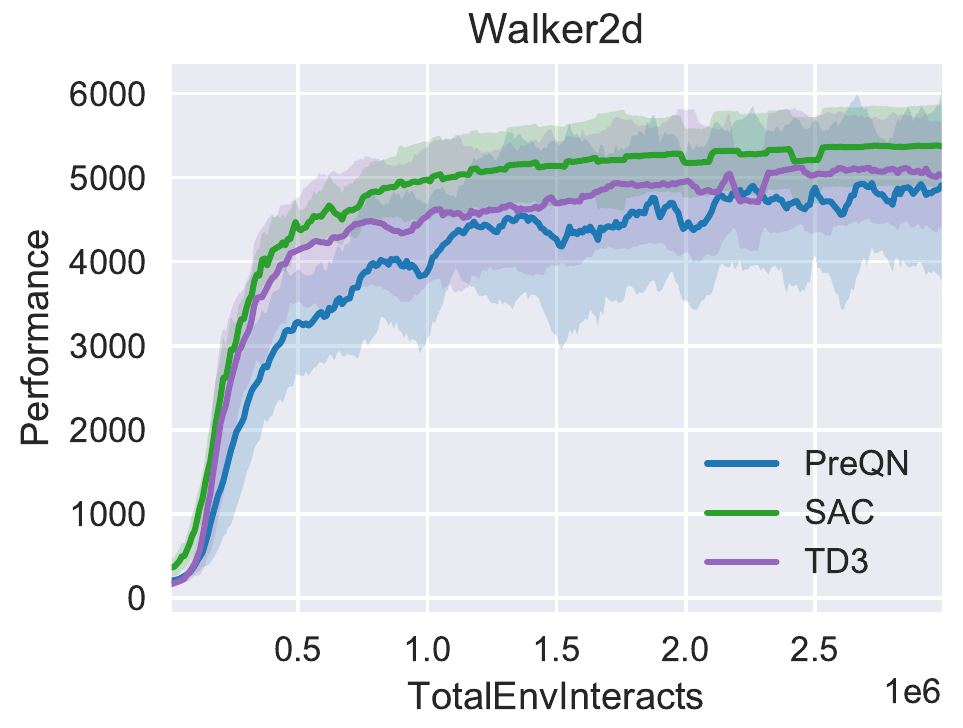}
\includegraphics[width=0.3\textwidth]{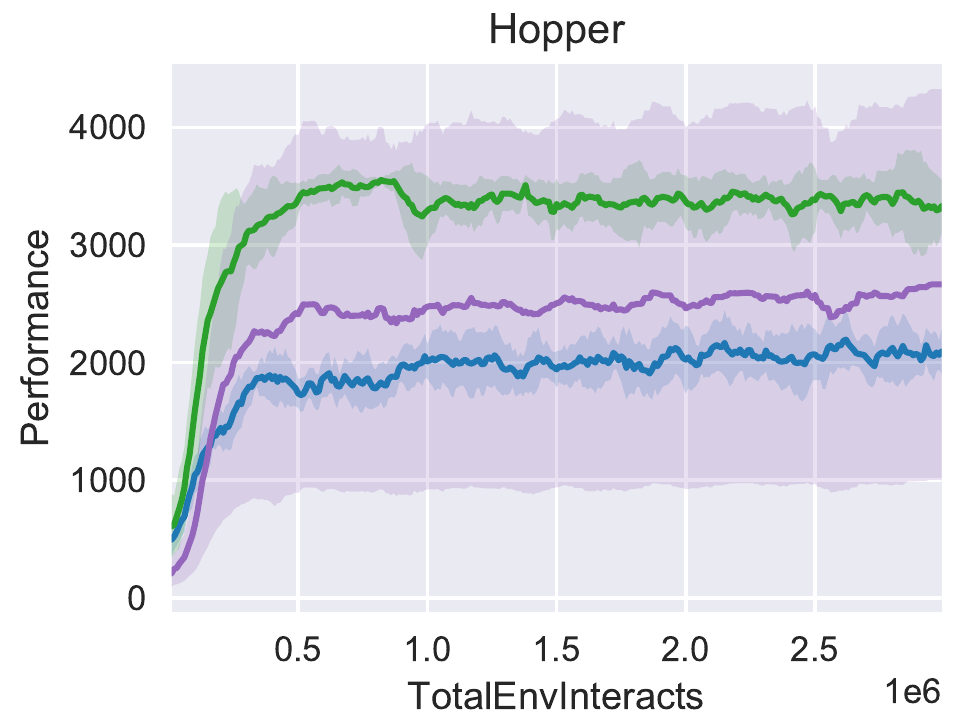}

\includegraphics[width=0.3\textwidth]{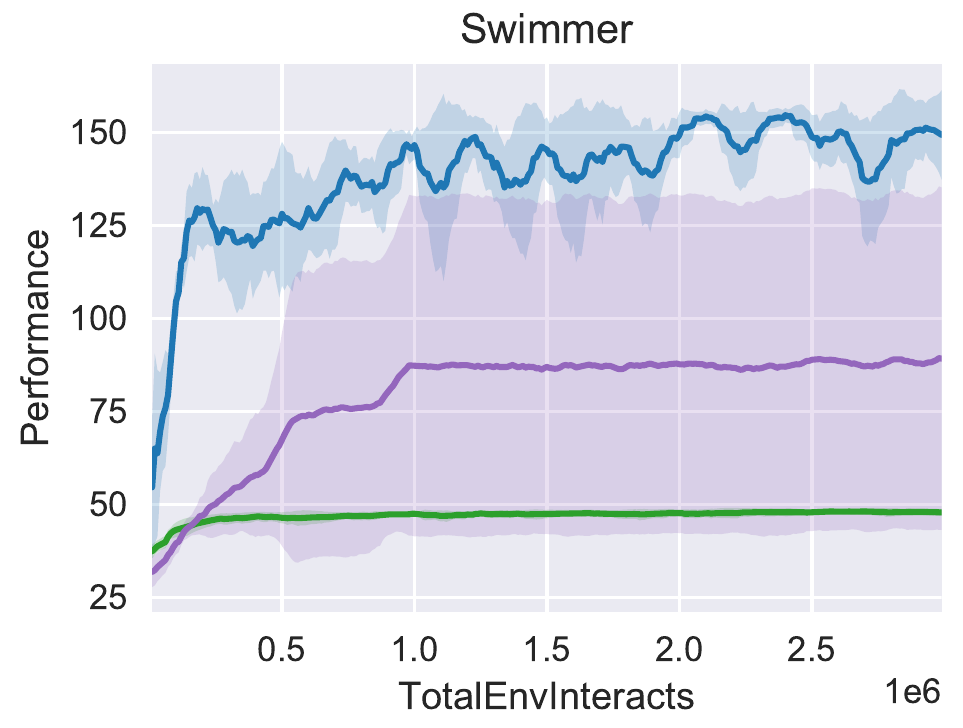}
\includegraphics[width=0.3\textwidth]{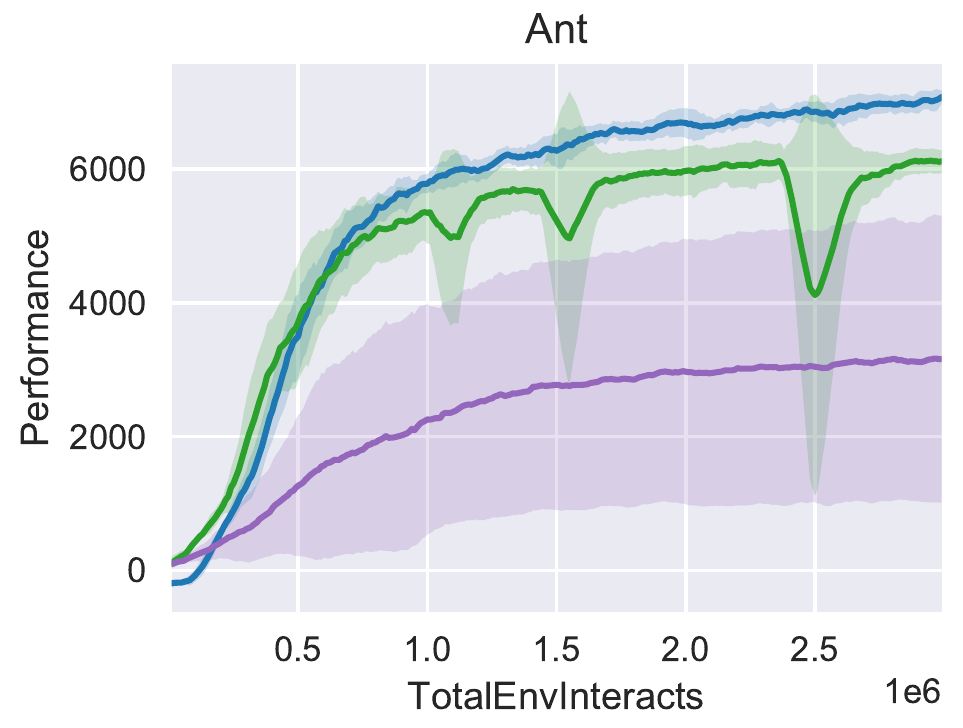}
\caption{Benchmarking PreQN against TD3 and SAC on standard OpenAI Gym MuJoCo environments. Curves are averaged over 7 random seeds. PreQN is stable and performant, despite not using target networks. The PreQN experiments used sin activations; the TD3 and SAC experiments used relu activations.}
\label{bench}
\end{figure*}

We benchmarked PreQN on five environments from the OpenAI Gym, comparing to TD3 and fixed-temperature SAC, and we present results in Figure \ref{bench}. For each algorithm, we experimented with using relu and sin activations, and we found that PreQN performed best with sin activations, while TD3 and SAC performed best with relu activations. (As a result, we report results in our benchmark for PreQN-sin, TD3-relu, and SAC-relu.) However, we did not do any hyperparameter tuning for TD3 and SAC to specifically accomodate the sin activations, and instead relied on hyperparameters based on the literature which were well-tuned for relus. Hyperparameters for all experiments are given in Appendix \ref{benchmethods}.

In general, PreQN is stable and performant, comparing favorably with the baseline algorithms. In some cases it outperforms (eg Swimmer and Ant) and in some cases it underperforms (eg Hopper). We find this outcome interesting and exciting because PreQN represents a different development path for DQL algorithms than is currently standard: it lacks target networks, only uses a single Q-function instead of multiple, makes no modifications to the bootstrap, and uses vanilla gradient steps for Q-learning instead of adaptive or momentum-based optimizers like Adam \cite{Kingma2015}. However (as we will shortly discuss), we found that it did not fully avert divergence when combined with relu networks.

\begin{figure}
\centering
\includegraphics[width=0.23\textwidth]{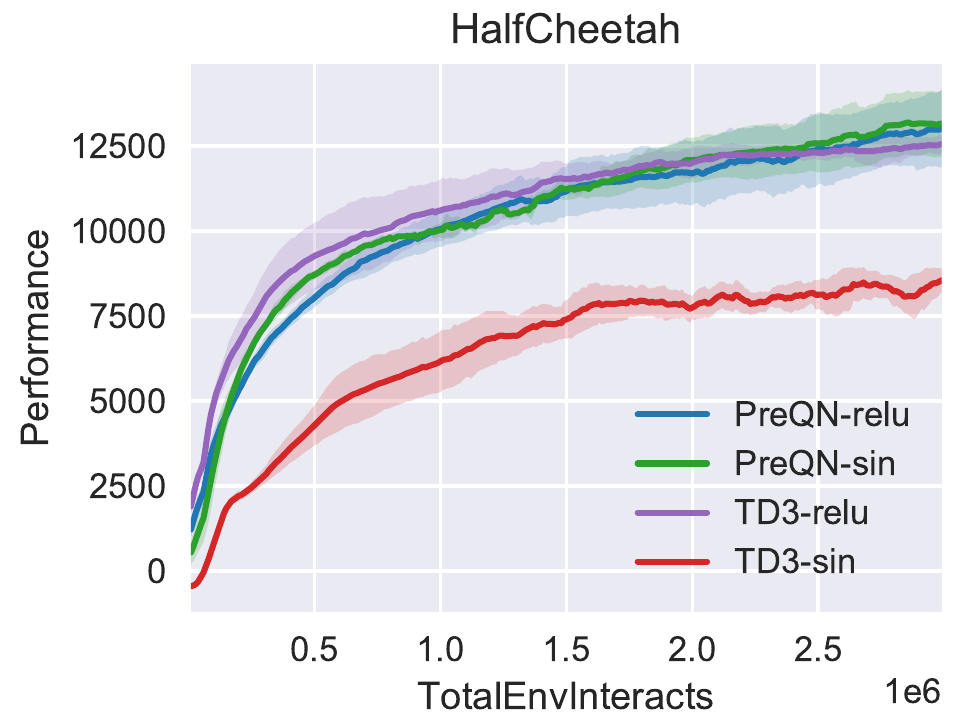}
\includegraphics[width=0.23\textwidth]{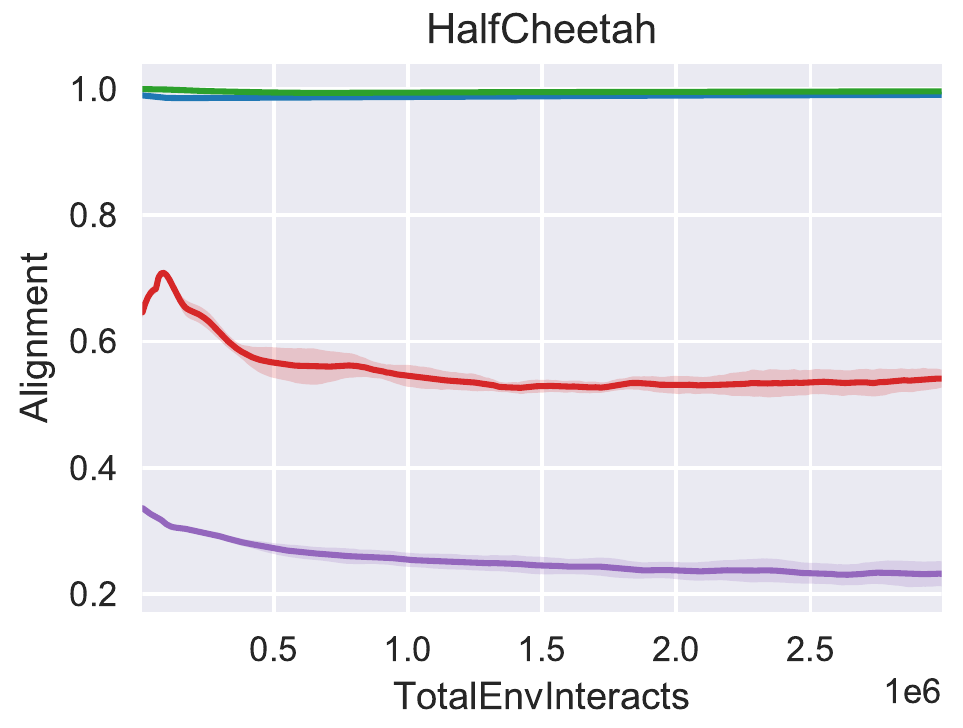}

\caption{Examining the cosine alignment of actual $Q$-value change with intended $Q$-value change ($\cos(Q'-Q, y -Q)$) for PreQN and TD3 with relu and sin activations. Curves are averaged over 3 random seeds.}
\label{align}
\end{figure}

To measure how well DQL updates push $Q$-values towards their targets, we evaluated an alignment measure given by $\cos(Q'-Q, y - Q)$, where $y$ is the target for the algorithm ($\calT^* Q_{\theta}$ in PreQN, and $\calT^* Q_{\psi}$ in TD3, where $\psi$ are the parameters of the slowly-changing target network). We show partial results in Figure \ref{align} and complete results in Appendix \ref{alignfull}. We compared PreQN to TD3, because these two algorithms are equivalent except for the $Q$-learning component. While PreQN produces high alignment regardless of architecture by design, TD3 with the sin function (TD3-sin) produces updates that are better-aligned with their targets than TD3 with the relu function (TD3-relu). This accords well with our empirical analysis of the NTK: for sin networks, the NTK is closer to diagonal, so $Q' - Q \approx \alpha K (y-Q)$ is closer to $\alpha (y-Q)$. Perhaps surprisingly, performance for TD3-sin was generally weaker than performance for TD3-relu, but we did not retune any of the hyperparameters from TD3-relu for TD3-sin; we speculate that better performance with TD3-sin may be achievable with a target network that updates more quickly. Performance for PreQN-relu was generally weaker than PreQN-sin, primarily due to occasional divergence; this result suggests that cancelling within-batch generalization is not a universal solution to divergence issues, and favorable architecture choices may be useful. However, in experiments not included in this report, we found that divergence issues with PreQN-relu were straightforwardly resolved by decreasing the learning rate (at a nontrivial cost to performance).

We are intrigued by the fact that empirical analysis of the NTK successfully predicts how the cosine alignment of a DQL update changes with architecture in the TD3 experiments. It has been observed that architecture changes can have a significant effect on performance in deep RL (for example, see \mbox{\citet{Henderson2018}}), but to the best of our knowledge, no one has previously proposed any method for predicting how the behavior of a given algorithm might change with architecture. Based on our results, we are cautiously optimistic that the NTK is the correct object of study for such predictions, and we recommend a more rigorous empirical analysis relating NTK measurements, architectures, and hyperparameter choices in DQL to performance.

\subsection{Remarks on Computational Cost}

Our implementation of PreQN was significantly slower than our implementation of SAC (by more than 50\%), due to the requirement of calculating backward passes separately for each state-action pair in the batch, and solving the system of equations $K_{\theta} Z = \Delta$. However, we consider it plausible that many adjustments could be made to reduce computational cost from our basic code. (For instance: we did not reuse the gradients from computing $K_{\theta}$ for forming the update, $\sum_{s,a} Z(s,a) \nabla_{\theta} Q_{\theta}(s,a)$, and this redundancy can be eliminated.)

\section{Other Related Work}

Previously, \citet{Melo2008} proved sufficient conditions for the convergence of $Q$-learning with linear function approximators. Their conditions were fairly restrictive, essentially requiring that the algorithm behave as if it were on-policy---removing one of the components of the triad. We see an interesting parallel to our results for the linear approximation case (Theorem 2), which also effectively remove a component of the triad by requiring the algorithm to behave as if it were tabular.

Concurrently to our work, \cite{Bhatt2019} developed CrossNorm, a variant of DDPG that uses a careful application of BatchNorm \cite{Ioffe2015} to achieve stable learning without target networks. Also concurrently, \citet{Fu2019} performed a rigorous empirical study of Fitted Q-Iteration (FQI) \cite{Riedmiller2005} to gain insight into divergence issues in DQL, and ultimately proposed an algorithm based on data distribution modifications to improve performance.

\section{Conclusions}

In this work, we examined how $Q$-values change under a DQL update in order to understand how divergence might arise. We used our insights to develop a practical algorithm, called PreQN, which attacks one of the root causes of divergence: the generalization properties of the $Q$-function approximator, as quantified by the neural tangent kernel \cite{Jacot2018}. Our experiments show that PreQN, with appropriate design choices, is stable and performant on various high-dimensional continuous control tasks.

Intriguingly, theoretical and empirical work shows that the NTK converges to a constant (independent of network parameters) in the limit of wide networks \cite{Jacot2018}; this result makes it possible to study the evolution of neural network functions through their linearization around the starting point \cite{Lee2019}. In this regime, through the correspondence in Lemma 3, DQL should behave quite closely to linear TD-learning. We consider the detailed analysis of this connection to be an interesting avenue for potential future work.

\bibliography{preqn}
\bibliographystyle{icml2019}

\onecolumn
\appendix
\section{Proofs} \label{appendix}

\lemuone*

\begin{proof} To establish that $\calU_1$ is a contraction, we compute:
\begin{align*}
\|\calU_1 Q_1 - \calU_1 Q_2\|_{\infty} &= \|(1-\alpha)(Q_1 - Q_2) + \alpha (\calT^* Q_1 - \calT^* Q_2)\|_{\infty} \\
&\leq (1-\alpha)\|Q_1 - Q_2\|_{\infty} + \alpha\|\calT^* Q_1 - \calT^* Q_2\|_{\infty} \\
&\leq (1-\alpha)\|Q_1 - Q_2\|_{\infty} + \alpha \gamma \| Q_1 - Q_2\|_{\infty} \\
&= (1 - (1-\gamma)\alpha) \|Q_1 - Q_2\|_{\infty}.
\end{align*}
Thus $\calU_1$ contracts with modulus $1 - (1-\gamma)\alpha < 1$. That $Q^*$ is the fixed-point follows immediately from $\calT^* Q^* = Q^*$. 

\end{proof}

\lemutwo*

\begin{proof}
First, we observe that for any $s,a$, we have:
\begin{align*}
[\calU_2 Q_1 - \calU_2 Q_2](s,a) &= \big(1 - \alpha \rho(s,a)\big)\bigg( Q_1 (s,a) - Q_2 (s,a)\bigg) + \alpha \rho(s,a) \bigg( \left[\calT^*Q_1 - \calT^* Q_2\right](s,a) \bigg) \\
&\leq \big(1 - \alpha \rho(s,a)\big)\|Q_1 - Q_2\|_{\infty} + \alpha \gamma \rho(s,a) \|Q_1 - Q_2\|_{\infty} \\
&= \big(1 - (1-\gamma)\alpha \rho(s,a)\big) \|Q_1 - Q_2\|_{\infty}.
\end{align*}
Then, by taking the max over $(s,a)$ on both sides (first the right, and then the left), we obtain:
\begin{align*}
\|\calU_2 Q_1 - \calU_2 Q_2\|_{\infty} &\leq \max_{s,a} \big(1 - (1-\gamma)\alpha \rho(s,a)\big) \|Q_1 - Q_2\|_{\infty} \\
&= \left(1 - (1-\gamma)\alpha \rho_{min}\right)\|Q_1 - Q_2\|_{\infty},
\end{align*}
where $\rho_{min} = \min_{s,a} \rho(s,a)$. If $\rho_{min} > 0$ (which is equivalent to the condition $\forall s,a, \rho(s,a) > 0$), then the modulus $\left(1 - (1-\gamma)\alpha \rho_{min}\right) < 1$ and $\calU_2$ is a contraction. That its fixed-point is $Q^*$ follows from $\calT^* Q^* = Q^*$.

However, if $\rho_{min} = 0$, we merely have an upper bound on $\|\calU_2 Q_1 - \calU_2 Q_2\|_{\infty}$ and that alone is not sufficient to demonstate that $\calU_2$ is a not a contraction. This is easy to demonstrate without resorting to inequalities, though: for any $s,a$ with $\rho(s,a) = 0$, it is straightforward to see that $\calU_2 Q (s,a) = Q(s,a)$---that is, $\calU_2$ leaves that $Q$-value unchanged. Thus there are choices of $Q_1, Q_2$ such that $\|\calU_2 Q_1 - \calU_2 Q_2\|_{\infty} = \|Q_1 - Q_2\|_{\infty}$ and $\calU_2$ is a non-expansion.
\end{proof}

\lemuthree*

\begin{proof}
For the linear function approximation case, $\nabla_{\theta} Q_{\theta}(s,a) = \phi(s,a)$, and the NTK (Eq. \ref{kernelcomponent}) therefore has components
\begin{equation*}
K_{\theta}(\bar{s},\bar{a},s,a) = \phi(\bar{s},\bar{a})^T \phi(s,a),
\end{equation*}
which are independent of $\theta$. Thus $K_{\theta} = K$. 

Furthermore, $\nabla^2_{\theta} Q_{\theta} (s,a) = 0$ for all $s,a$ and all $\theta$, so all higher-order terms are zero. 
\end{proof}

\thmtwo*

\begin{proof}
Using index notation instead of tracking state-action pairs,
\begin{align*}
[\calU_3 Q_1 - \calU_3 Q_2]_i &= [Q_1 - Q_2]_i + \alpha \sum_j K_{ij} \rho_j \left[ (\calT^* Q_1 - Q_1) - (\calT^* Q_2 - Q_2)\right]_j \\
&= \sum_j \left(\delta_{ij} - \alpha K_{ij}\rho_j\right) [Q_1 - Q_2]_j + \alpha \sum_j K_{ij} \rho_j \left[ \calT^* Q_1 - \calT^* Q_2\right]_j \\
&\leq \sum_j \left(|\delta_{ij} - \alpha K_{ij}\rho_j| + \alpha \gamma | K_{ij}| \rho_j \right)\|Q_1 - Q_2 \|_{\infty}.
\end{align*}

Thus we can obtain a modulus as $\beta(K) = \max_i \sum_j \left(|\delta_{ij} - \alpha K_{ij}\rho_j| + \alpha \gamma | K_{ij}| \rho_j \right)$. We'll break it up into on-diagonal and off-diagonal parts, and assume that $\alpha K_{ii} \rho_i < 1$:
\begin{align*}
\beta(K) &= \max_i \sum_j \left(|\delta_{ij} - \alpha K_{ij}\rho_j| + \alpha \gamma | K_{ij}| \rho_j \right) \\
&= \max_i \left(\left( |1 - \alpha K_{ii}\rho_i| + \alpha \gamma K_{ii} \rho_i \right) + (1+\gamma) \alpha \sum_{j\neq i}  |K_{ij}| \rho_j \right) \\
&= \max_i \left(1 - (1-\gamma)\alpha K_{ii} \rho_i + (1+\gamma)\alpha \sum_{j\neq i}  |K_{ij}| \rho_j\right)
\end{align*}

A guarantee that $\beta(K) < 1$ can then be obtained by requiring that
\begin{equation*}
\forall i, \;\;\;\;\; (1+\gamma) \sum_{j\neq i}  |K_{ij}| \rho_j \leq (1-\gamma) K_{ii} \rho_i.
\end{equation*}

We note that this is a quite restrictive condition, since for $\gamma$ high, $(1+\gamma) / (1-\gamma)$ will be quite large, and the LHS has a sum over all off-diagonal terms in a row.

\end{proof}

\thmthree*

\begin{proof}
First, to obtain Eq. \ref{qconverge}:
\begin{align*}
\|\tilde{Q} - Q_i\| &= \|\calU_{i-1} \tilde{Q} - \calU_{i-1} Q_{i-1}\| && \text{Iterate sequence and fixed-point assumption}\\
&\leq \beta_{i-1} \|\tilde{Q} - Q_{i-1}\| && \text{Definition of Lipschitz continuity} \\
& \leq \left(\prod_{k=0}^{i-1} \beta_k\right) \|\tilde{Q} - Q_0\| && \text{Repeated application of above}
\end{align*}

That the sequence converges to $\tilde{Q}$ follows from 
\begin{equation*}
\lim_{N \to \infty} \prod_{k=j}^{N} \beta_k = 0,
\end{equation*}
when $\forall k \geq j, \; \beta_k \in [0,1)$.
\end{proof}

\lemufour*

\begin{proof} Let $\Phi_{\theta} \in \Real{m\times n}$ have rank $r$ and a singular value decomposition given by $\Phi_{\theta} = U \Sigma V^T$, with $\Sigma \in \Real{r \times r}$. Recall that $U^T U = V^T V = I_r$. Then $\Phi_{\theta} \Phi_{\theta}^T = U \Sigma^2 U^T$ and $\Phi_{\theta}^T \Phi_{\theta} = V \Sigma^2 V^T$, and:
\begin{align*}
(\Phi_{\theta} \Phi_{\theta}^T)^{\dagger} \Phi_{\theta} &= (U \Sigma^{-2} U^T) U \Sigma V^T \\
&= U \Sigma^{-1} V^T \\
&= U \Sigma V^T (V \Sigma^{-2} V^T) \\
&= \Phi_{\theta} (\Phi_{\theta}^T \Phi_{\theta})^{\dagger}
\end{align*}

\end{proof}

\section{Methods for PreQN Benchmark} \label{benchmethods}

We used the following hyperparameters for our PreQN benchmark experiments:

\begin{center}
\begin{tabular}{l|ll}
\hline
Hyperparameter & Value & \\ 
\hline
Discount factor $\gamma$ & $0.99$ \\
Batch size & $256$ & \\
Network size & $[256, 256]$ & \\
Actor learning rate & $10^{-3}$ & \\
Actor optimizer & Adam & \\
Critic learning rate & $10^{-3}$ for TD3 and SAC, & $0.1$ for PreQN \\
Update frequency & $50$ \\
Update after & $5000$ \\
Start steps & $5000$ \\
Alignment threshold $\eta$ (PreQN only) & $0.97$ \\
Action noise (TD3 and PreQN) & $\calN(0, 0.1)$\\
Target network polyak averaging (TD3 and SAC) & $0.995$ \\
Entropy regularization coefficient (SAC only) & $0.1$ \\
\hline
\end{tabular}
\end{center}

Here, `update frequency' refers to the number of environment steps that would elapse between occasions of updating the networks. During each update, there would be as many gradient descent steps (and target network polyak-averaging steps, if applicable) as environment steps had elapsed since the last update (so that for the overall training run, the ratio of gradient steps to env steps would be 1:1).

`Update after' refers to the number of steps that would elapse at the beginning of training before any gradient descent steps would take place (to allow time for filling the replay buffer).

For improved exploration, at the beginning of training agents would spend `start steps' number of steps acting under a uniform random policy.

TD3 hyperparameters for target policy smoothing and policy delay were taken from \cite{Fujimoto2018} without modification.

\raggedbottom
\pagebreak

\section{Extended Results for Neural Tangent Kernel Analysis} \label{completentk}

\subsection{Experiments with Network Width}
\begin{figure}[H]
\centering
\begin{subfigure}{}
\includegraphics[width=0.4\textwidth]{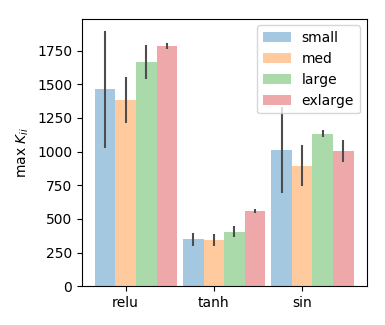}
\end{subfigure}
\begin{subfigure}{}
\includegraphics[width=0.4\textwidth]{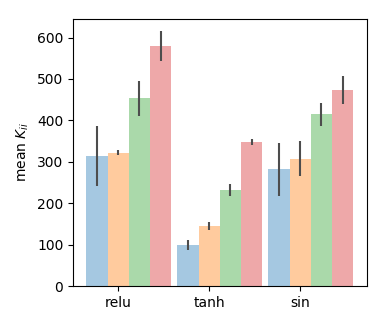}
\end{subfigure}

\begin{subfigure}{}
\includegraphics[width=0.4\textwidth]{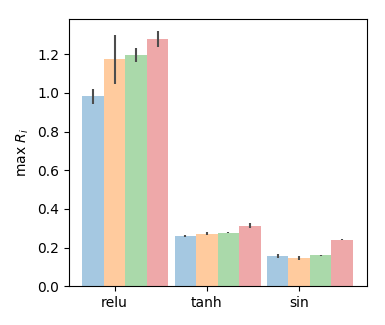}
\end{subfigure}
\begin{subfigure}{}
\includegraphics[width=0.4\textwidth]{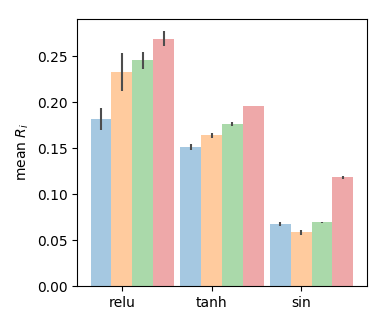}
\end{subfigure}

\caption{NTK analysis for randomly-initialized networks with various activation functions, where the NTKs were formed using 1000 steps taken by a rails-random policy in the \textbf{Ant-v2} gym environment (with the same data used across all trials). Networks are MLPs with widths of $32, 64, 128, 256$ hidden units (small, med, large, exlarge respectively) and $2$ hidden layers. Each bar is the average over 3 random trials (different network initializations).}
\end{figure}

\begin{figure}[H]
\centering
\begin{subfigure}{}
\includegraphics[width=0.4\textwidth]{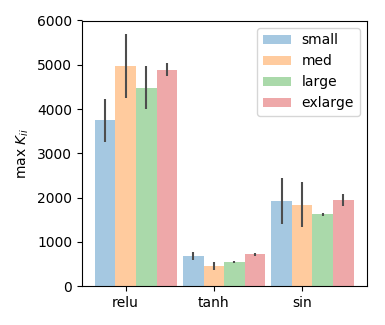}
\end{subfigure}
\begin{subfigure}{}
\includegraphics[width=0.4\textwidth]{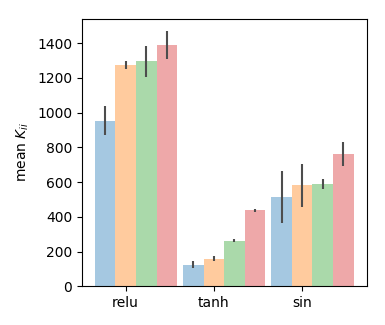}
\end{subfigure}

\begin{subfigure}{}
\includegraphics[width=0.4\textwidth]{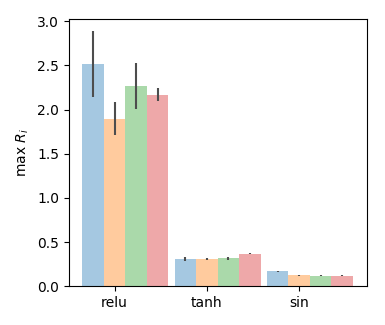}
\end{subfigure}
\begin{subfigure}{}
\includegraphics[width=0.4\textwidth]{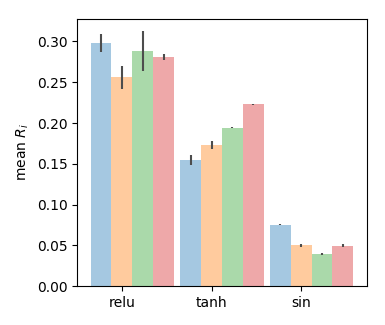}
\end{subfigure}

\caption{NTK analysis for randomly-initialized networks with various activation functions, where the NTKs were formed using 1000 steps taken by a rails-random policy in the \textbf{HalfCheetah-v2} gym environment (with the same data used across all trials). Networks are MLPs with widths of $32, 64, 128, 256$ hidden units (small, med, large, exlarge respectively) and $2$ hidden layers. Each bar is the average over 3 random trials (different network initializations).}
\end{figure}

\begin{figure}[H]
\centering
\begin{subfigure}{}
\includegraphics[width=0.4\textwidth]{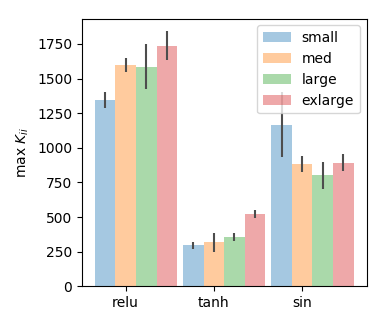}
\end{subfigure}
\begin{subfigure}{}
\includegraphics[width=0.4\textwidth]{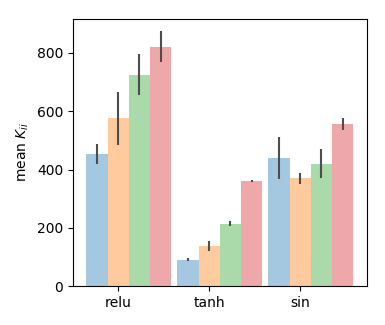}
\end{subfigure}

\begin{subfigure}{}
\includegraphics[width=0.4\textwidth]{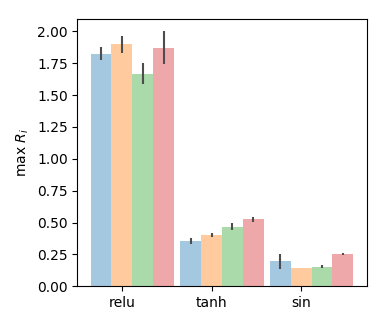}
\end{subfigure}
\begin{subfigure}{}
\includegraphics[width=0.4\textwidth]{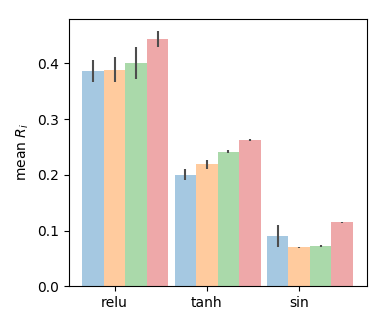}
\end{subfigure}

\caption{NTK analysis for randomly-initialized networks with various activation functions, where the NTKs were formed using 1000 steps taken by a rails-random policy in the \textbf{Walker2d-v2} gym environment (with the same data used across all trials). Networks are MLPs with widths of $32, 64, 128, 256$ hidden units (small, med, large, exlarge respectively) and $2$ hidden layers. Each bar is the average over 3 random trials (different network initializations).}
\end{figure}

\subsection{Experiments with Network Depth}
\begin{figure}[H]
\centering
\begin{subfigure}{}
\includegraphics[width=0.4\textwidth]{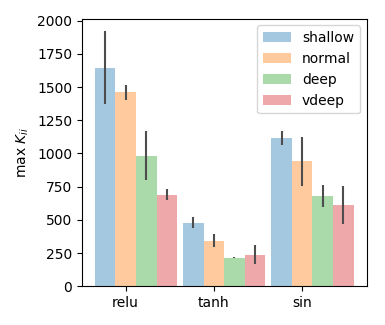}
\end{subfigure}
\begin{subfigure}{}
\includegraphics[width=0.4\textwidth]{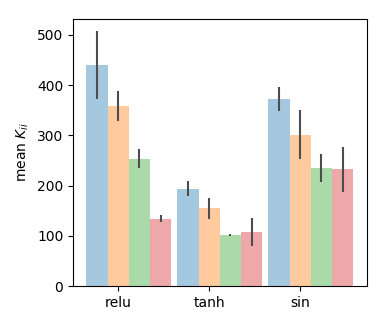}
\end{subfigure}

\begin{subfigure}{}
\includegraphics[width=0.4\textwidth]{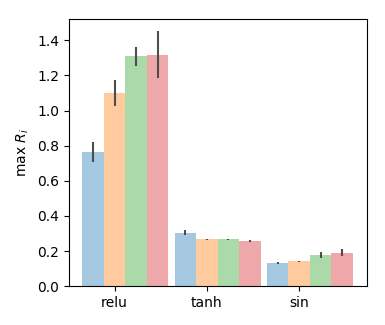}
\end{subfigure}
\begin{subfigure}{}
\includegraphics[width=0.4\textwidth]{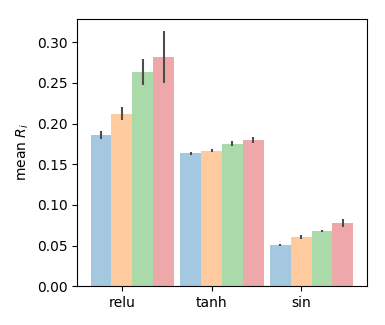}
\end{subfigure}

\caption{NTK analysis for randomly-initialized networks with various activation functions, where the NTKs were formed using 1000 steps taken by a rails-random policy in the \textbf{Ant-v2} gym environment (with the same data used across all trials). Networks are MLPs with depths of $1, 2, 3, 4$ hidden layers (shallow, normal, deep, vdeep respectively) and $64$ units per layer. Each bar is the average over 3 random trials (different network initializations).}
\end{figure}

\begin{figure}[H]
\centering
\begin{subfigure}{}
\includegraphics[width=0.4\textwidth]{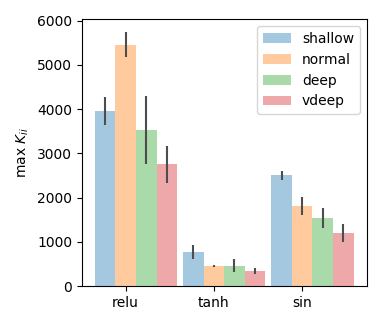}
\end{subfigure}
\begin{subfigure}{}
\includegraphics[width=0.4\textwidth]{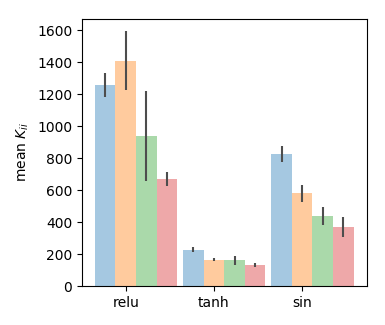}
\end{subfigure}

\begin{subfigure}{}
\includegraphics[width=0.4\textwidth]{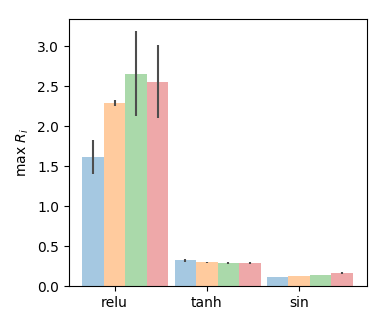}
\end{subfigure}
\begin{subfigure}{}
\includegraphics[width=0.4\textwidth]{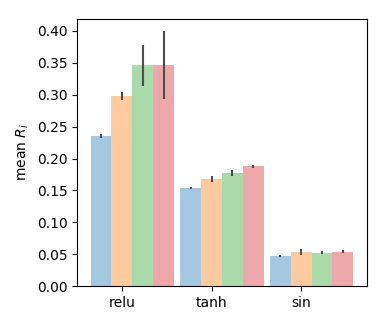}
\end{subfigure}

\caption{NTK analysis for randomly-initialized networks with various activation functions, where the NTKs were formed using 1000 steps taken by a rails-random policy in the \textbf{HalfCheetah-v2} gym environment (with the same data used across all trials). Networks are MLPs with depths of $1, 2, 3, 4$ hidden layers (shallow, normal, deep, vdeep respectively) and $64$ units per layer. Each bar is the average over 3 random trials (different network initializations).}
\end{figure}

\begin{figure}[H]
\centering
\begin{subfigure}{}
\includegraphics[width=0.4\textwidth]{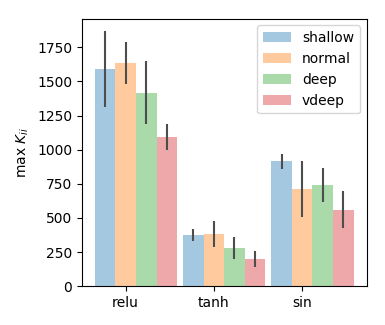}
\end{subfigure}
\begin{subfigure}{}
\includegraphics[width=0.4\textwidth]{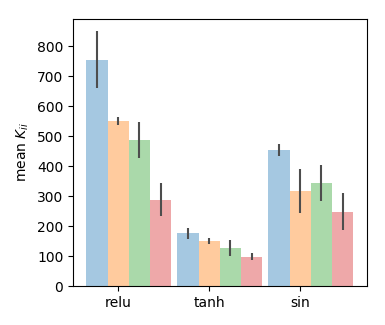}
\end{subfigure}

\begin{subfigure}{}
\includegraphics[width=0.4\textwidth]{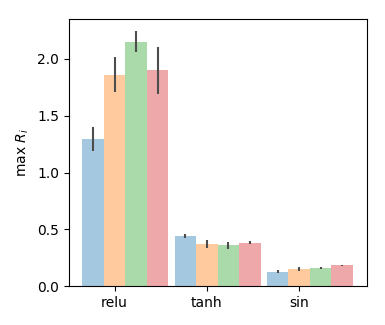}
\end{subfigure}
\begin{subfigure}{}
\includegraphics[width=0.4\textwidth]{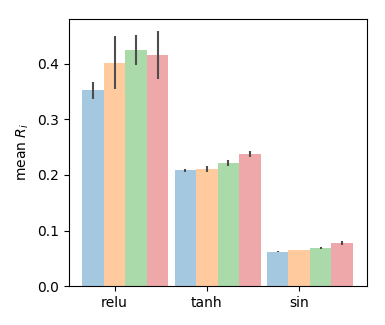}
\end{subfigure}

\caption{NTK analysis for randomly-initialized networks with various activation functions, where the NTKs were formed using 1000 steps taken by a rails-random policy in the \textbf{Walker2d-v2} gym environment (with the same data used across all trials). Networks are MLPs with depths of $1, 2, 3, 4$ hidden layers (shallow, normal, deep, vdeep respectively) and $64$ units per layer. Each bar is the average over 3 random trials (different network initializations).}
\end{figure}

\raggedbottom
\pagebreak

\section{Extended Results for Alignment Experiment with Architecture Ablation} \label{alignfull}

\begin{figure}[H]
\centering
\begin{subfigure}{}
\includegraphics[width=0.3\textwidth]{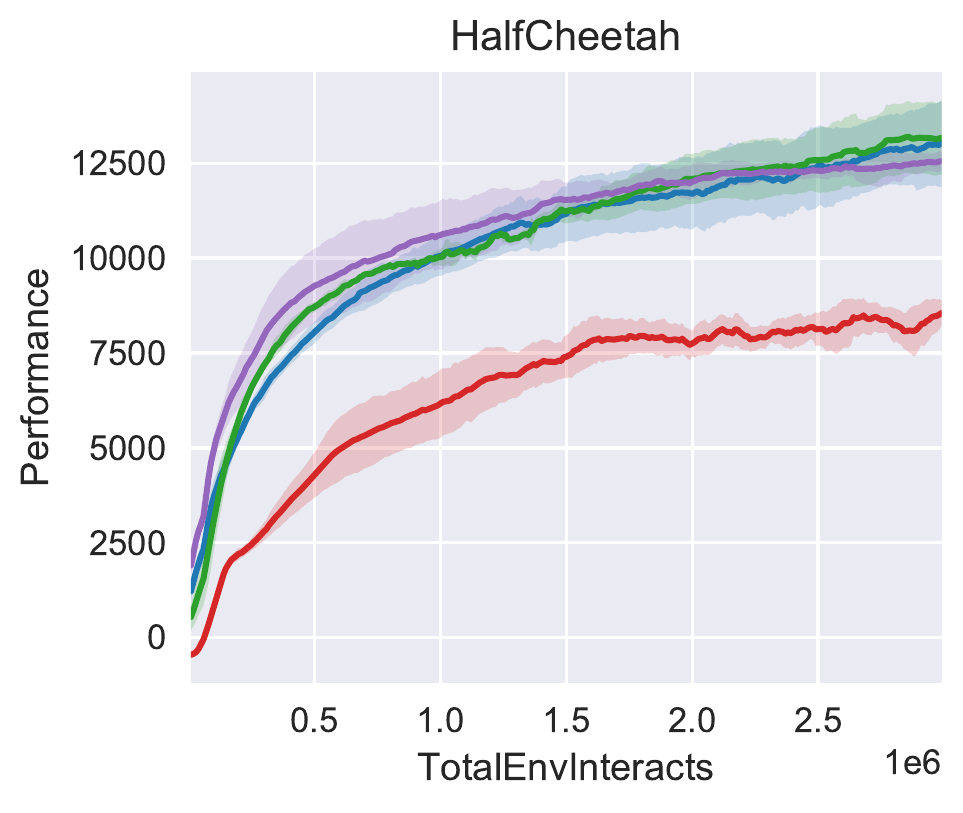}
\end{subfigure}
\begin{subfigure}{}
\includegraphics[width=0.3\textwidth]{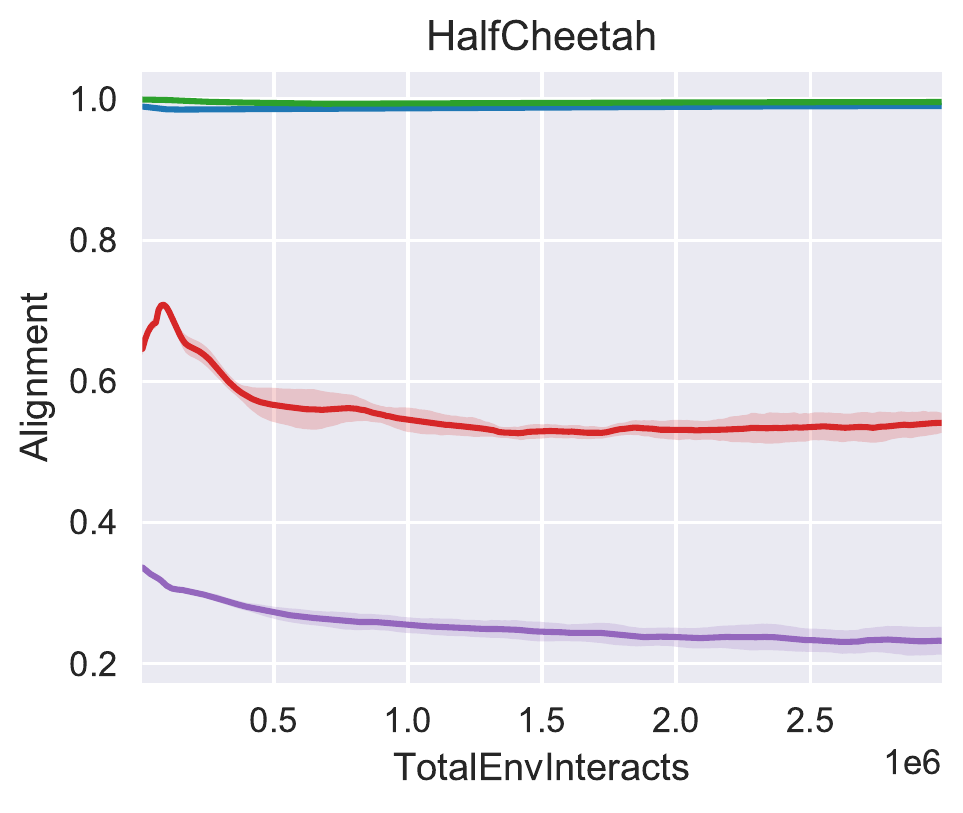}
\end{subfigure}
\begin{subfigure}{}
\includegraphics[width=0.3\textwidth]{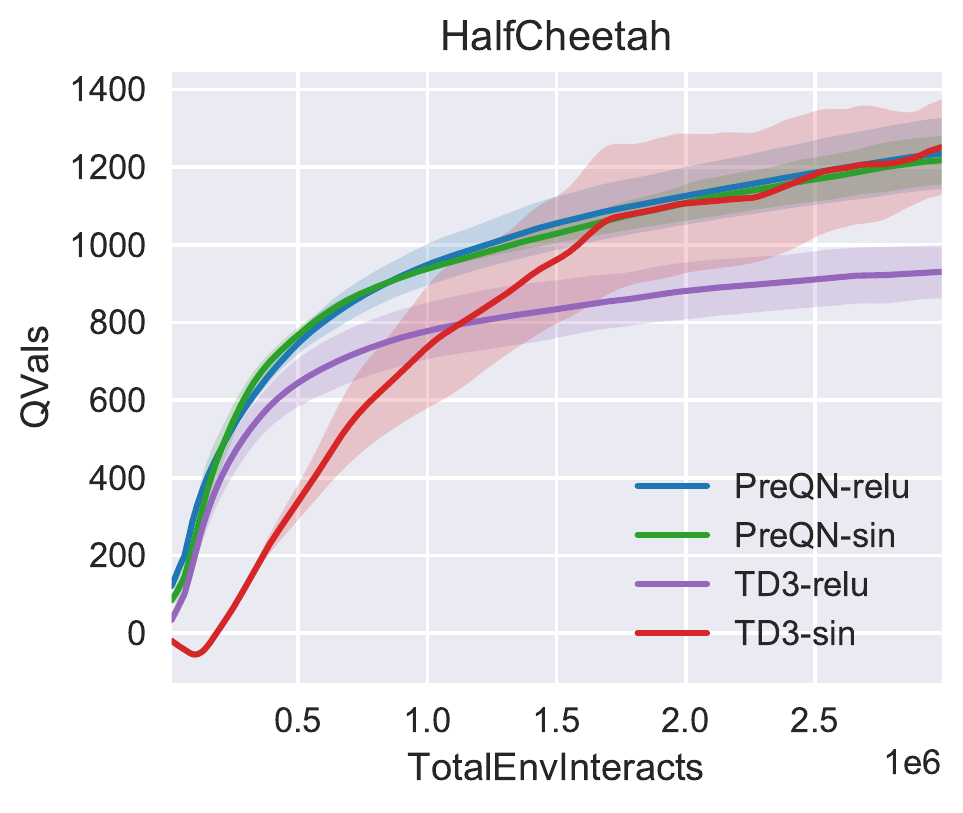}
\end{subfigure}
\caption{Comparison between PreQN and TD3 for relu and sin activation functions in the HalfCheetah-v2 gym environment. Results averaged over 3 random seeds.}
\end{figure}

\begin{figure}[H]
\centering
\begin{subfigure}{}
\includegraphics[width=0.3\textwidth]{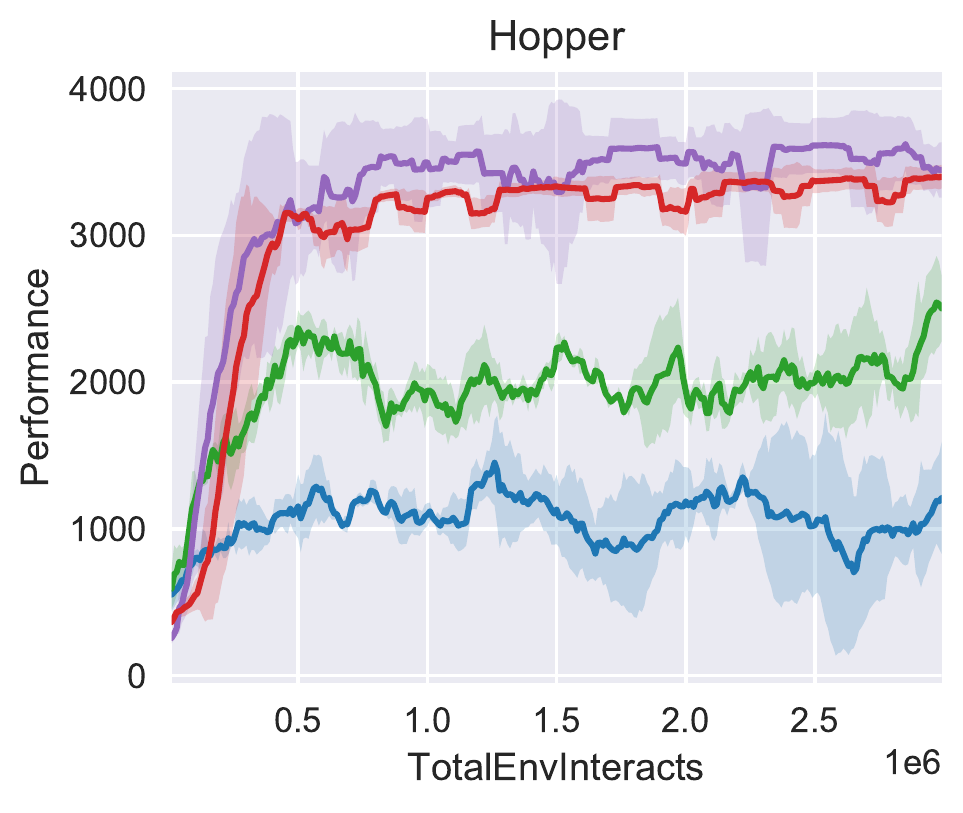}
\end{subfigure}
\begin{subfigure}{}
\includegraphics[width=0.3\textwidth]{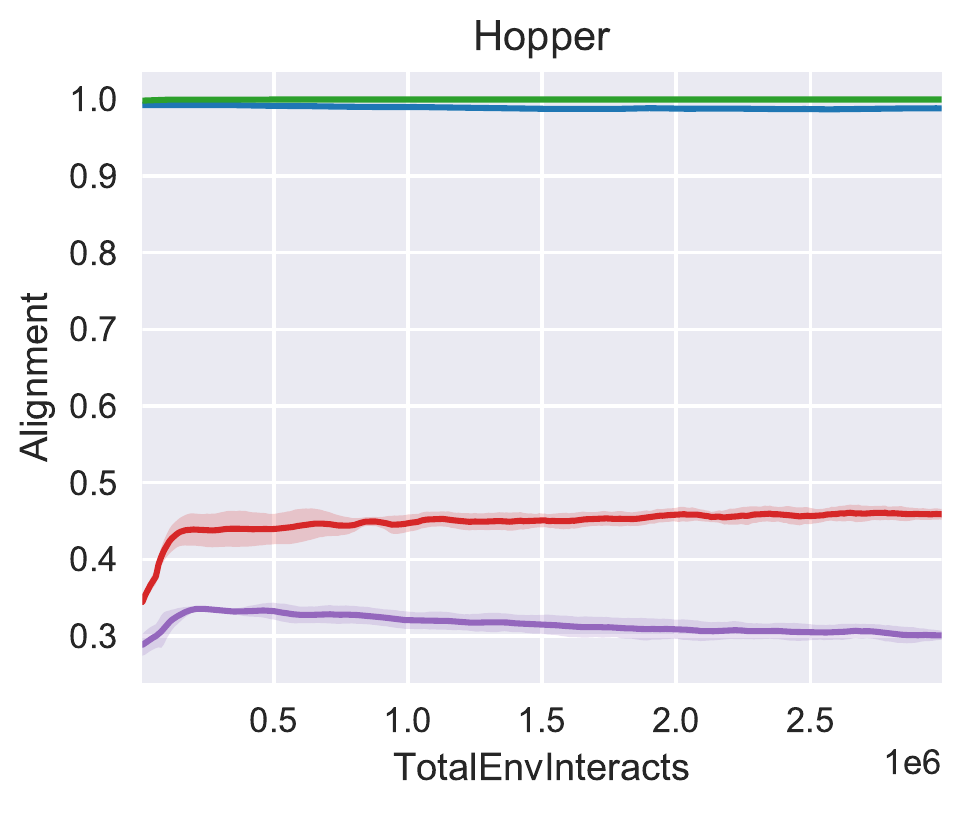}
\end{subfigure}
\begin{subfigure}{}
\includegraphics[width=0.3\textwidth]{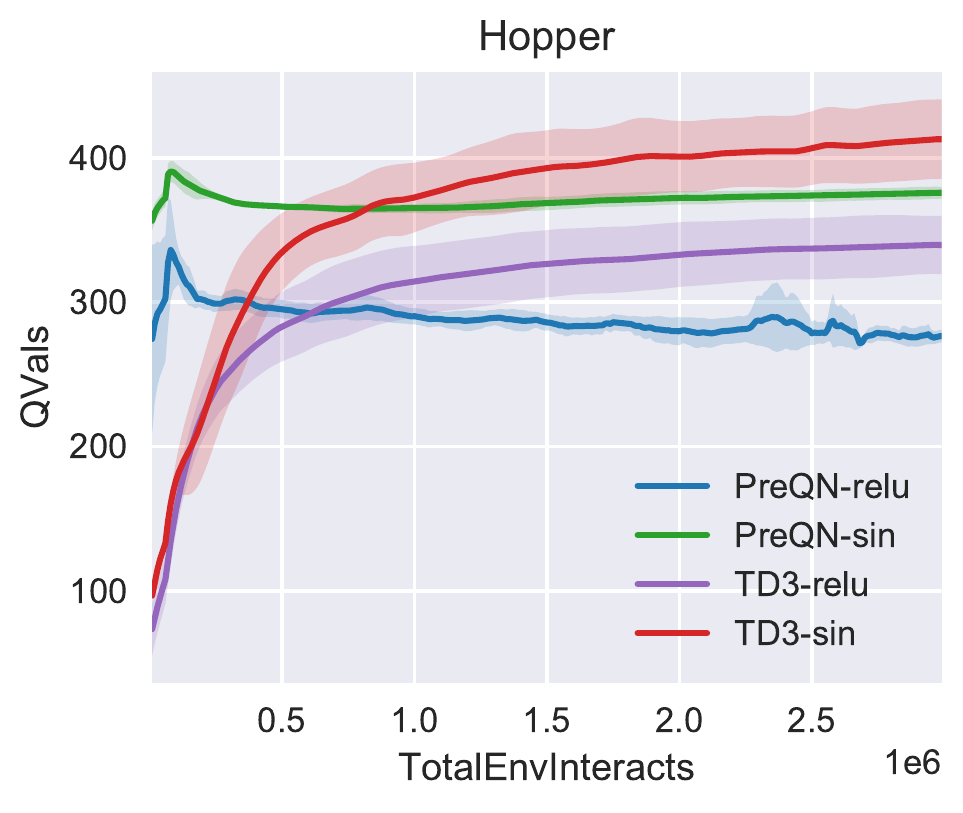}
\end{subfigure}
\caption{Comparison between PreQN and TD3 for relu and sin activation functions in the Hopper-v2 gym environment. Results averaged over 3 random seeds.}
\end{figure}

\begin{figure}[H]
\centering
\begin{subfigure}{}
\includegraphics[width=0.3\textwidth]{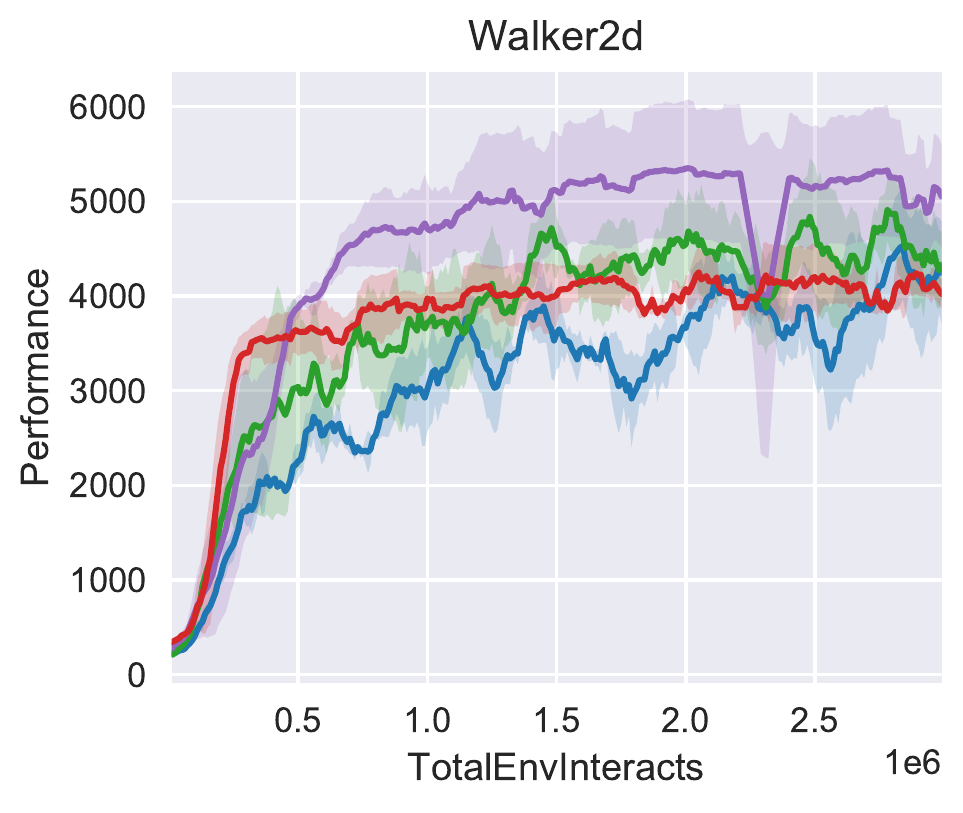}
\end{subfigure}
\begin{subfigure}{}
\includegraphics[width=0.3\textwidth]{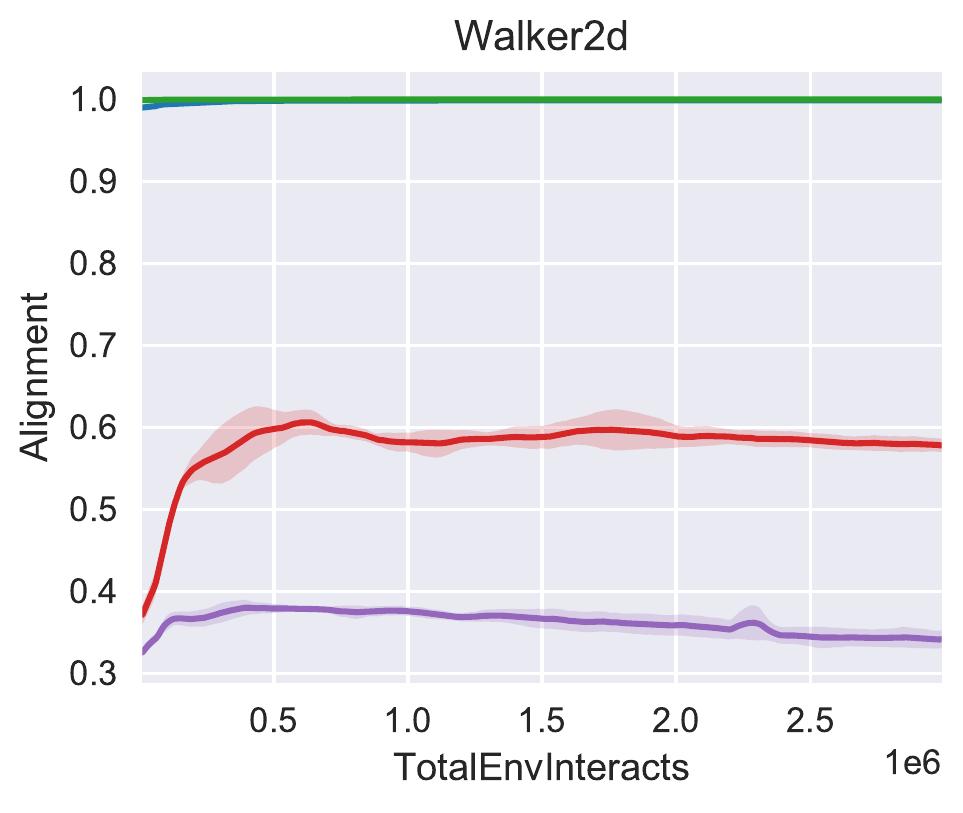}
\end{subfigure}
\begin{subfigure}{}
\includegraphics[width=0.3\textwidth]{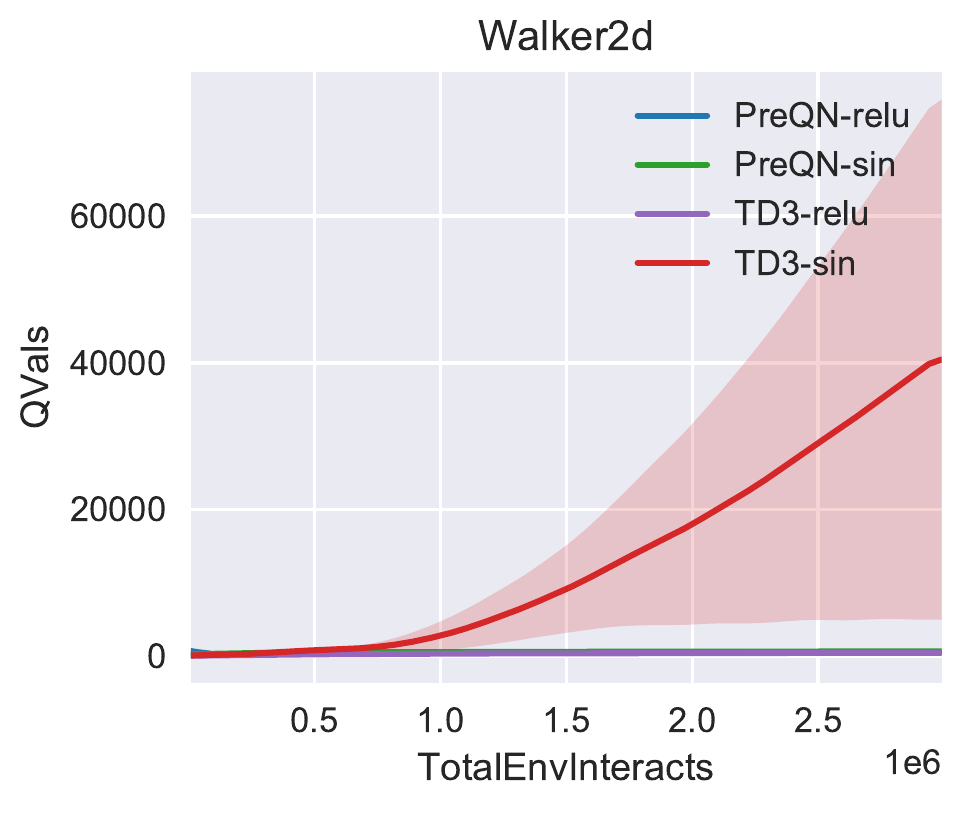}
\end{subfigure}
\caption{Comparison between PreQN and TD3 for relu and sin activation functions in the Walker2d-v2 gym environment. Results averaged over 3 random seeds.}
\end{figure}

\begin{figure}[H]
\centering
\begin{subfigure}{}
\includegraphics[width=0.3\textwidth]{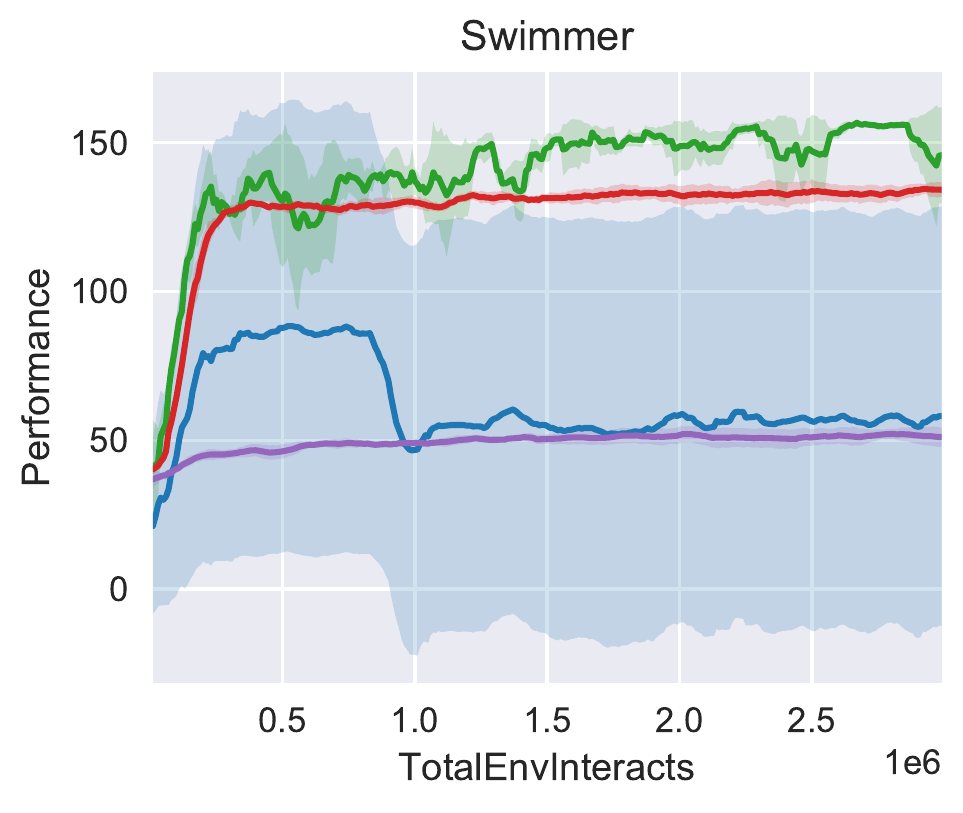}
\end{subfigure}
\begin{subfigure}{}
\includegraphics[width=0.3\textwidth]{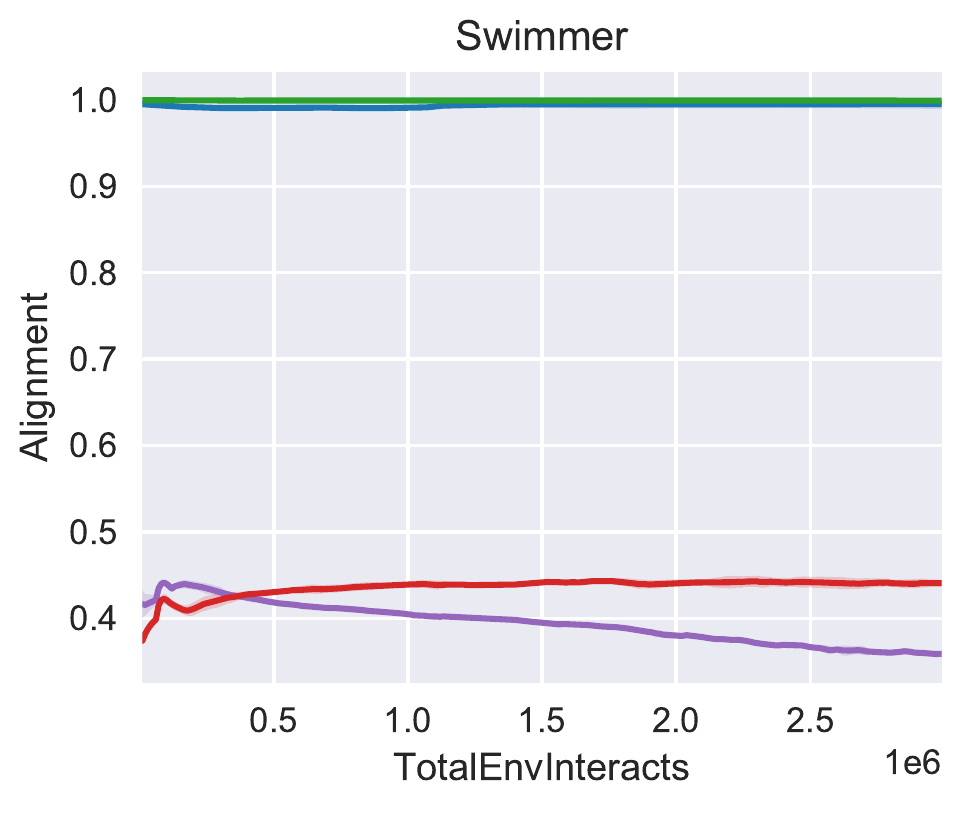}
\end{subfigure}
\begin{subfigure}{}
\includegraphics[width=0.3\textwidth]{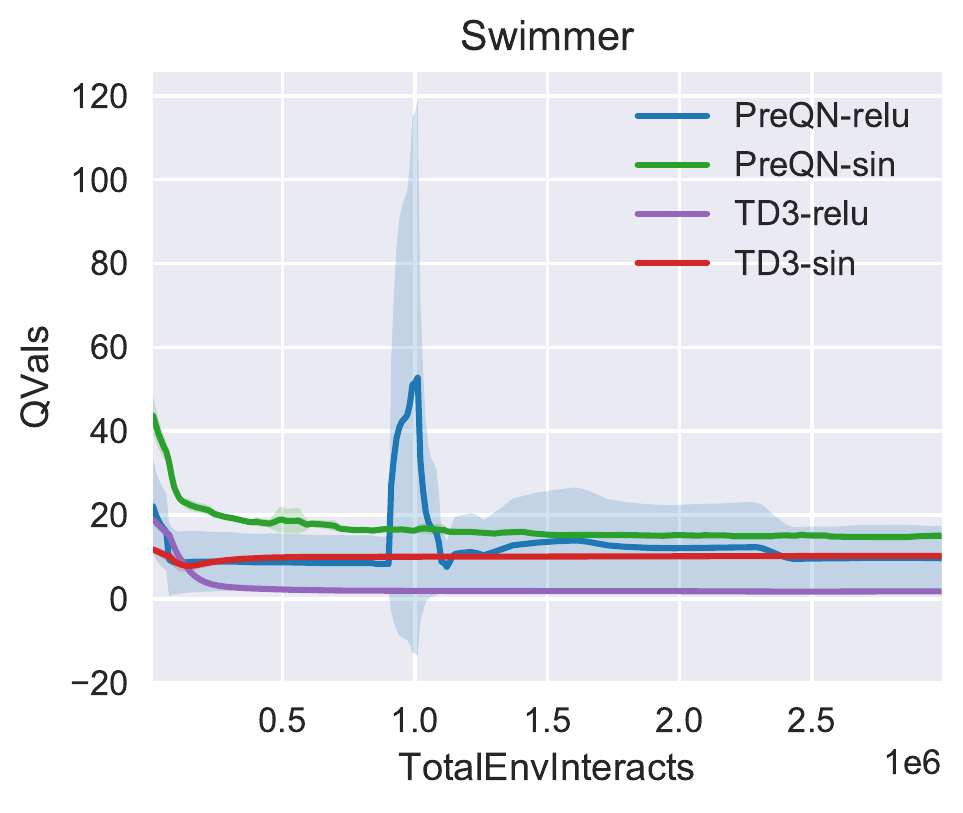}
\end{subfigure}
\caption{Comparison between PreQN and TD3 for relu and sin activation functions in the Swimmer-v2 gym environment. Results averaged over 3 random seeds.}
\end{figure}

\begin{figure}[H]
\centering
\begin{subfigure}{}
\includegraphics[width=0.3\textwidth]{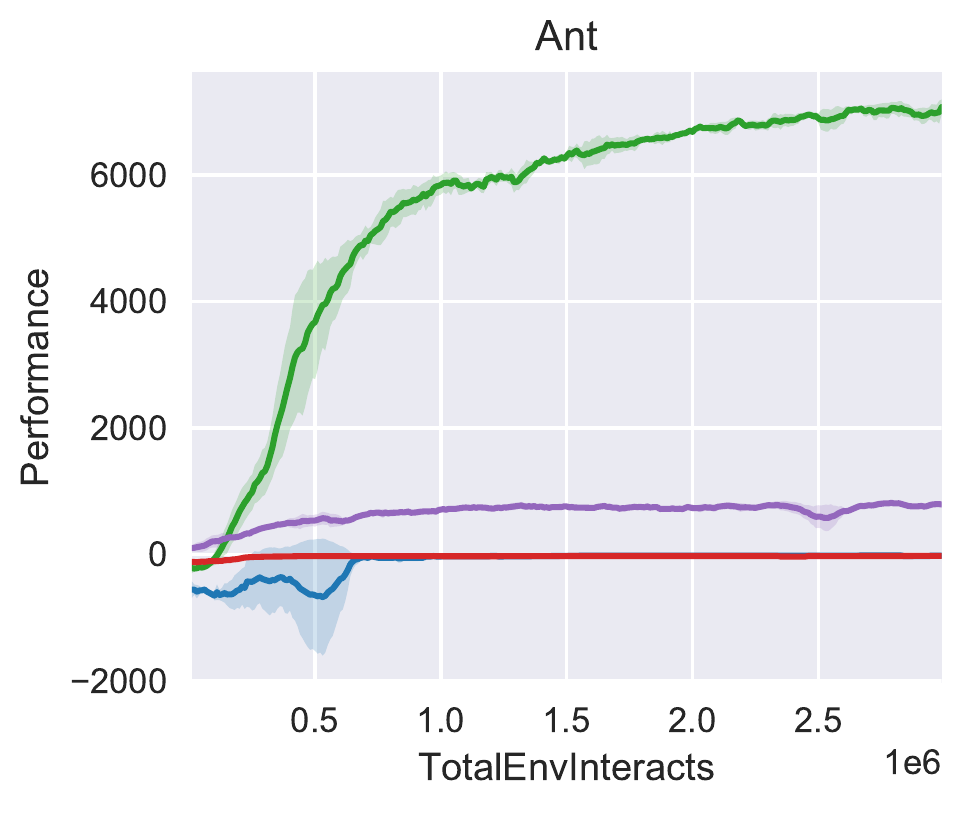}
\end{subfigure}
\begin{subfigure}{}
\includegraphics[width=0.3\textwidth]{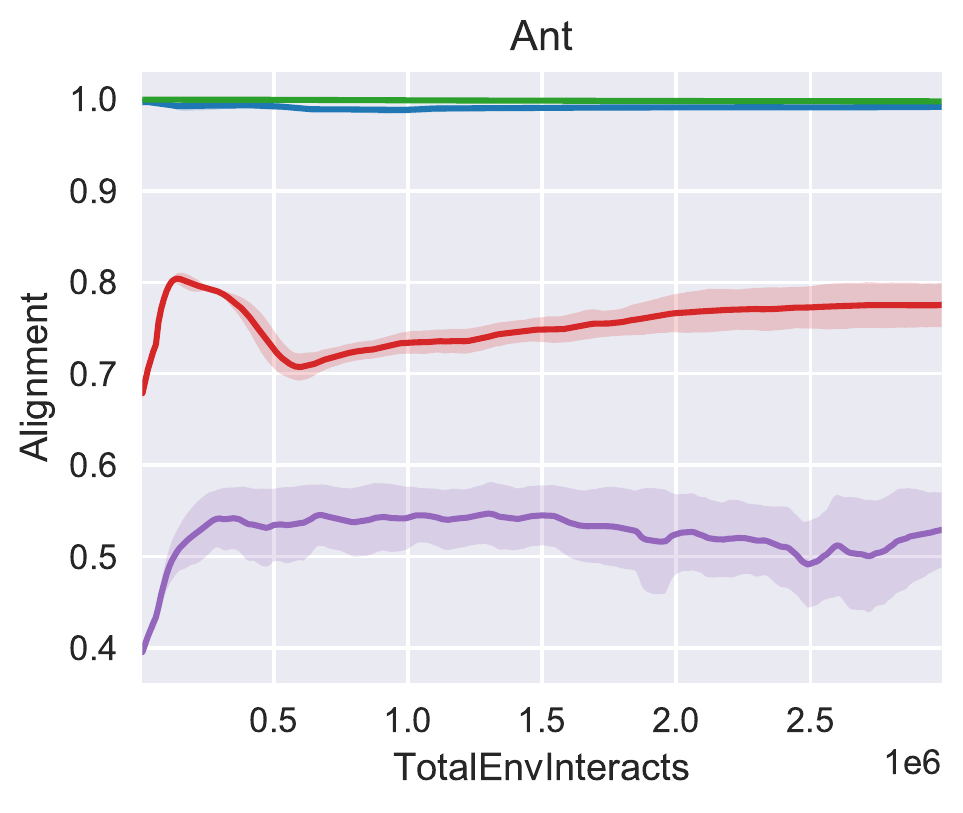}
\end{subfigure}
\begin{subfigure}{}
\includegraphics[width=0.3\textwidth]{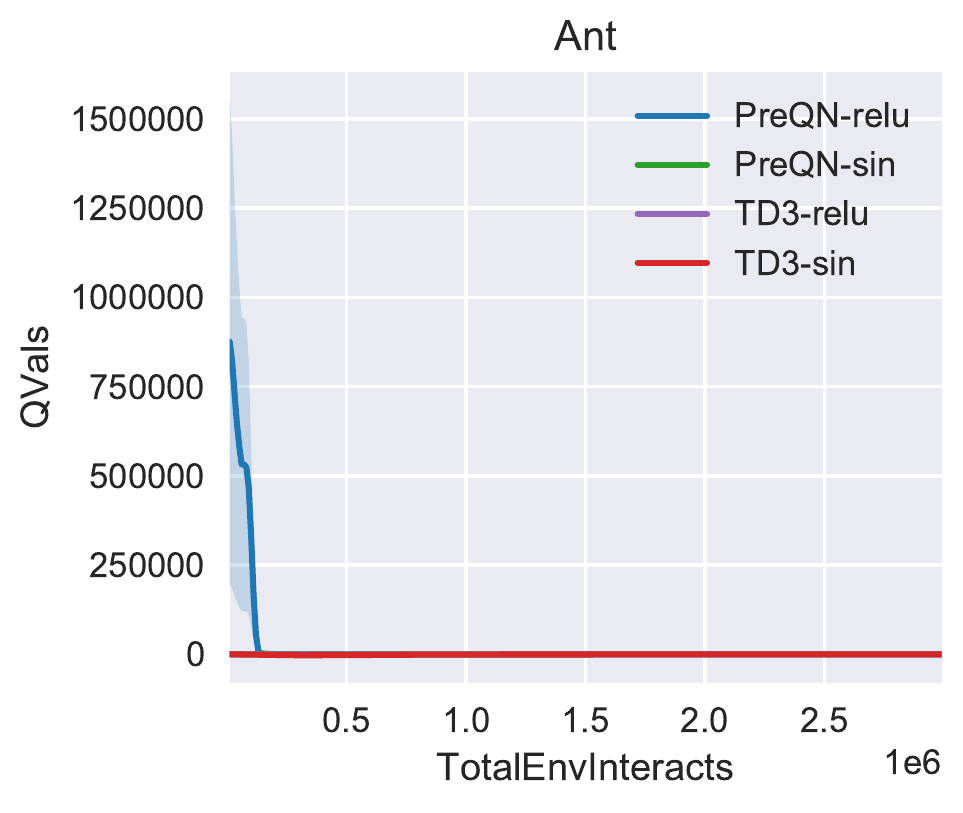}
\end{subfigure}
\caption{Comparison between PreQN and TD3 for relu and sin activation functions in the Ant-v2 gym environment. Results averaged over 3 random seeds.}
\end{figure}

\end{document}